\definecolor{mydarkblue}{rgb}{0,0.03,0.15}
\newcommand{\polbase}{\pi_{\mathrm{B}}}	
\newcommand{\polrob}{\pi_{\mathrm{R}}}	
\newcommand{\polrobi}{\pi_{\mathrm{S}}}
\newcommand{\pol}{\pi}
\newcommand{\indist}{p_0}
\newcommand{\return}{\rho}
\newcommand{\disc}{\gamma}
\newcommand{\rmax}{R_{\max}}
\title{Safe Policy Improvement by Minimizing Robust Baseline Regret}
\author{Marek Petrik, Yinlam Chow, Mohammad Ghavamzadeh}
\begin{document}

\maketitle

\begin{abstract}
An important problem in sequential decision-making under uncertainty is to use limited data to  compute a {\em safe} policy, i.e.,~a policy that is guaranteed to perform at least as well as a given baseline strategy. In this paper, we develop and analyze a new {\em model-based} approach to compute a safe policy when we have access to an inaccurate dynamics model of the system with known accuracy guarantees. Our proposed robust method uses this (inaccurate) model to directly minimize the (negative) regret w.r.t.~the baseline policy. Contrary to the existing approaches, minimizing the regret allows one to improve the baseline policy in states with accurate dynamics and seamlessly fall back to the baseline policy, otherwise. We show that our formulation is NP-hard and propose an approximate algorithm. Our empirical results on several domains show that even this relatively simple approximate algorithm can significantly outperform standard approaches.  
\end{abstract}


\section{Introduction} 
\label{sec:intro}

Many problems in science and engineering can be formulated as a sequential decision-making problem under uncertainty. A common scenario in such problems that occurs in many different fields, such as online marketing, inventory control, health informatics, and computational finance, is to find a good or an optimal strategy/policy, given a batch of data generated by the current strategy of the company (hospital, investor). Although there are many techniques to find a good policy given a batch of data, only a few of them guarantee that the obtained policy will perform well, when it is deployed. Since deploying an untested policy can be risky for the business, the product (hospital, investment) manager does not usually allow it to happen, unless we provide her/him with some performance guarantees of the obtained strategy, in comparison to the baseline policy (e.g.,~the policy that is currently in use). 

In this paper, we focus on the \emph{model-based} approach to this fundamental problem in the context of \emph{infinite-horizon} discounted Markov decision processes (MDPs). In this approach, we use the batch of data and build a \emph{model} or a \emph{simulator} that approximates the true behavior of the dynamical system, together with an \emph{error function} that captures the accuracy of the model at each state of the system. Our goal is to compute a \emph{safe} policy, i.e.,~a policy that is guaranteed to perform at least as well as the baseline strategy, using the simulator and error function. Most of the work on this topic has been in the {\em model-free} setting, where {\em safe} policies are computed directly from the batch of data, without building an explicit model of the system~\cite{thomas2015high,Thomas2015a}. Another class of {\em model-free} algorithms are those that use a batch of data generated by the current policy and return a policy that is guaranteed to perform better. They optimize for the policy by repeating this process until convergence~\cite{kakade2002approximately,Pirotta2013}. 

A major limitation of the existing methods for computing safe policies is that they either adopt a newly learned policy with provable improvements or do not make any improvement at all by returning the baseline policy. These approaches may be quite limiting when model uncertainties are not uniform across the state space. In such cases, it is desirable to guarantee an improvement over the baseline policy by combining it with a learned policy on a state-by-state basis. In other words, we want to use the learned policy at the states in which either the improvement is significant or the model uncertainty (error function) is small, and to use the baseline policy everywhere else. However, computing a learned policy that can be effectively combined with a baseline policy is non-trivial due to the complex effects of policy changes in an MDP. Our key insight is that this goal can be achieved by minimizing the (negative) \emph{robust regret} w.r.t.~the baseline policy. This unifies the sources of uncertainties in the learned and baseline policies and allows a more systematic performance comparison. Note that our approach differs significantly from the standard one, which compares a pessimistic performance estimate of the learned policy with an optimistic estimate of the baseline strategy. That may result in rejecting a learned policy with a performance (slightly) better than the baseline, simply due to the discrepancy between the pessimistic and optimistic evaluations. 

The {\em model-based} approach of this paper builds on \emph{robust} Markov decision processes~\cite{Iyengar2005,Wiesemann2013,Ahmed2013}. The main difference is the availability of the baseline policy that creates unique challenges for sequential optimization. To the best of our knowledge, such challenges have not yet been fully investigated in the literature. A possible solution is to solve the robust formulation of the problem and then accept the resulted policy only if its conservative performance estimate is better than the baseline. While such an idea has been investigated in the {\em model-free} setting (e.g.,~\cite{Thomas2015a}), we show in this paper that such an approach is overly conservative. 

As the main contribution of the paper, we propose and analyze a new robust optimization formulation that captures the above intuition of minimizing robust regret w.r.t.~the baseline policy. After a preliminary discussion in Section~\ref{sec:prelimMDP}, we formally describe our  model and analyze its main properties in Section~\ref{sec:model}. We show that in solving this optimization problem, we may have to go beyond the standard space of deterministic policies and search in the space of randomized policies; we derive a bound on the performance loss of its solutions; and we prove that solving this problem is NP-hard. We also propose a simple and practical approximate algorithm. Then, in Section~\ref{sec:algorithms}, we show that the standard model-based approach is really a tractable approximation of robust baseline regret minimization. Finally, our experimental results in Section~\ref{sec:experiments} indicate that even the simple approximate algorithm significantly outperforms the standard model-based approach when the model is uncertain. 

\section{Preliminaries}  \label{sec:prelimMDP}

We consider problems in which the agent's interaction with the environment is modeled as an \emph{infinite-horizon} $\gamma$-discounted MDP. A $\gamma$-discounted MDP is a tuple $\M=\langle\X,\A,r,P,p_0,\gamma\rangle$, where $\X$ and $\A$ are the state and action spaces, $r(x,a)\in[-R_{\max},R_{\max}]$ is the bounded reward function, $P(\cdot|x,a)$ is the transition probability function, $p_0(\cdot)$ is the initial state distribution, and $\gamma\in (0,1]$ is a discount factor. We use $\Polrand = \{\pol : \X \rightarrow \Delta^{\A} \}$ and $\Poldet= \{\pol : \X \rightarrow \A \}$ to denote the sets of {\em randomized} and {\em deterministic} stationary Markovian policies, respectively, where $\Delta^{\A}$ is the set of probability distributions over the action space $\A$. 

%
%

Throughout the paper, we assume that the true reward $r$ of the MDP is known, but the true transition probability is not given. The generalization to include reward estimation is straightforward and is omitted for the sake of brevity. We use historical data to build a MDP \emph{model} with the transition probability denoted by $\widehat{P}$. Due to limited number of samples and other modeling issues, it is unlikely that $\widehat{P}$ matches the true transition probability of the system $P\opt$. We also require that the estimated model $\widehat{P}$ deviates from the true transition probability $P\opt$ as stated in the following assumption:

\begin{assumption}  
\label{asm:error}
For each $(x,a)\in\mathcal{X}\times \mathcal{A}$, the error function $e(x,a)$ bounds the $\ell_1$ difference between the estimated transition probability and true transition probability, i.e., 
\begin{equation}
\label{eq:ass1} 
\|P\opt(\cdot|x,a)-\widehat{P}(\cdot|x,a)\|_1 \le e(x,a).
\end{equation}
\end{assumption}

The error function $e$ can be derived either directly from samples using high probability concentration bounds, as we briefly outline in~\cref{subsec:sampling_bounds}, or based on specific domain properties.

To model the uncertainty in the transition probability, we adopt the notion of robust MDP (RMDP)~\cite{Iyengar2005,Nilim2005,Wiesemann2013}, i.e.,~an extension of MDP in which nature adversarially chooses the transitions from a given {\em uncertainty set}
\begin{align}
\label{eq:uncertainty_set}
\Nat(\widehat{P},e) = \Big\{\nat: \X\times\A\rightarrow \Delta^{\X}: \|\nat(\cdot|x,a) - \widehat{P}(\cdot|x,a)\|_1\leq e(x,a),\;\forall x,a\in\X\times\A\Big\}~. \nonumber 
\end{align}
From Assumption~\ref{asm:error}, we notice that the true transition probability is in the set of uncertain transition probabilities, i.e., $P^\star\in\Nat(\widehat{P},e)$. The above $\ell_1$ constraint is common in the RMDP literature~(e.g.,~\cite{Iyengar2005, Petrik2014, Wiesemann2013}). The uncertainty set $\Nat$ in RMDP is $(x,a)$-rectangular and randomized~\cite{LeTallec2007,Wiesemann2013}. One of the motivations for considering $(x,a)$-rectangular sets in RMDP is that they lead to tractable solutions in the conventional reward maximization setting. However, in the robust regret minimization problem that we propose in this paper, even if we assume that the uncertainty set is $(x,a)$-rectangular, it does not guarantee tractability of the solution. While it is of great interest to investigate the structure of uncertainty sets that lead to tractable algorithms in robust regret minimization, it is beyond the main scope of this paper and we leave it as future work.

For each policy $\pi\in\Pi_R$ and nature's choice $\xi\in\Xi$, the discounted {\em return} is defined as 
%
\begin{align*}
\rho(\pi,\nat) &= \lim_{T\rightarrow \infty} \Ex{\nat}{\sum_{t=0}^{T-1}\disc^t r \bigl(X_t,A_t\bigr) \mid X_0\sim p_0,\;A_t\sim \pol(X_t)} = p_0^\top \val_\pol^{\nat},
\end{align*}
%
where $X_t$ and $A_t$ are the state and action random variables at time $t$, and $\val_\pol^{\nat}$ is the corresponding {\em value function}. An {\em optimal policy} for a given $\xi$ is defined as $\pol\opt_{\nat}\in\argmax_{\pi\in\Polrand}\rho(\pi,\nat)$.
%
Similarly, under the true transition probability $P^\star$, the {\em true return} of a policy $\pol$ and a {\em truly optimal policy} are defined as $\return(\pol,P^\star)$ and $\pol\opt \in \arg\max_{\pol\in\Polrand} \return(\pol,P^\star)$, respectively. Although we define the optimal policy using $\arg\max_{\pi\in\Polrand}$, it is known that every reward maximization problem in MDPs has at least one optimal policy in $\Poldet$.

Finally, given a deterministic {\em baseline} policy $\pi_B$, we call a policy $\pi$ {\em safe}, if its "true" performance is guaranteed to be no worse than that of the baseline policy, i.e.,~$\rho(\pi,P^\star)\ge\rho(\pi_B,P^\star)$.


\section{Robust Policy Improvement Model} 
\label{sec:model}

In this section, we introduce and analyze an optimization procedure that robustly improves over a given baseline policy $\pi_B$. As described above, the main idea is to find a policy that is guaranteed to be an improvement for any realization of the uncertain model parameters. The following definition formalizes this intuition. 

\begin{definition}[The Robust Policy Improvement Problem] 
\label{def:robust_interleave}
Given a model uncertainty set $\Nat(\widehat{P},e)$ and a baseline policy $\polbase$, find a maximal $\zeta \ge 0$ such that there exists a policy $\pol\in\Polrand$ for which $\return(\pol,\nat) \ge \return(\polbase,\nat)+\zeta$, for every $\nat\in\Nat(\widehat{P},e)$.\footnote{From now on, for brevity, we omit the parameters $\widehat{P}$ and $e$, and use $\Nat$ to denote the model uncertainty set.}
\end{definition}

The problem posed in Definition~\ref{def:robust_interleave} readily translates to the following optimization problem:
\begin{equation} 
\label{eq:objective_robust_interleave}
\polrobi \in \arg \max_{\pol \in\Polrand} \min_{\nat\in\Nat} \; \Bigl( \return(\pol, \nat) - \return(\polbase, \nat)\Bigr).
\end{equation}
%

Note that since the baseline policy $\polbase$ achieves value $0$ in~\eqref{eq:objective_robust_interleave}, $\zeta$ in Definition~\ref{def:robust_interleave} is always non-negative. Therefore, any solution $\polrobi$ of~\eqref{eq:objective_robust_interleave} is {\em safe}, because under the true transition probability $P\opt\in\Nat(\widehat{P},e)$, we have the guarantee that $\return(\pol, P\opt) - \return(\polbase, P\opt) \ge \min_{\nat\in\Nat} \; \Bigl( \return(\pol, \nat) - \return(\polbase, \nat) \Bigr) \ge 0$. It is important to highlight how Definition~\ref{def:robust_interleave} differs from the standard approach (e.g.,~\cite{Thomas2015a}) on determining whether a policy $\pi$ is an improvement over the baseline policy $\pi_B$. The standard approach considers a statistical error bound that translates to the test: $\min_{\nat\in\Nat} \return(\pol,\nat) \ge \max_{\nat\in\Nat} \return(\polbase,\nat)$. Note that the uncertainty parameters $\nat$ on both sides of the above inequality are not necessarily the same. Therefore, any optimization procedure derived based on this test is more conservative than the problem in~\eqref{eq:objective_robust_interleave}. Indeed when the error function in $\Xi$ is large, even the baseline policy ($\pol=\polbase$) may not pass this test. In Section~\ref{sub:regret_example}, we show the conditions under which this approach fails.

In the remainder of this section, we highlight some major properties of the optimization problem~\eqref{eq:objective_robust_interleave}. Specifically, we show that its solution policy may be purely randomized, we compute a bound on the performance loss of its solution policy w.r.t.~$\pi^\star$, and we finally prove that it is a NP-hard problem.

\subsection{Policy Class}

The following theorem shows that we should search for the solutions of the optimization problem~\eqref{eq:objective_robust_interleave} in the space of randomized policies $\Pi_R$.  

\begin{theorem} 
\label{thm:randomized_optimal}
The solution to the optimization problem~\eqref{eq:objective_robust_interleave} may not be attained by a deterministic policy. Moreover, the loss due to considering deterministic policies cannot be bounded, i.e.,~there exists no constant $c \in \Real$ such that
\begin{equation*}
\max_{\pol \in\Polrand}\min_{\nat\in\Nat} \; \Bigl( \return(\pol, \nat) - \return(\polbase, \nat)\Bigr) \le c\cdot\max_{\pol \in\Poldet} \min_{\nat\in\Nat} \; \Bigl( \return(\pol, \nat) - \return(\polbase, \nat)\Bigr).
\end{equation*}
\end{theorem}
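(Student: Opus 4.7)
The plan is to establish both assertions of the theorem simultaneously via a single small counterexample MDP in which the maximum robust regret over $\Poldet$ equals zero, while a suitable randomized policy in $\Polrand$ attains a strictly positive robust regret. Since the right-hand side of the displayed inequality is then zero and the left-hand side strictly positive, no finite $c \in \Real$ can validate the bound, and simultaneously the optimum over $\Polrand$ is not attainable in $\Poldet$.

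The construction introduces states $s_0, s_1, s_2$ together with absorbing states $s^+, s^-$ carrying per-step rewards $+1$ and $-1$. Place $\polbase(s_0) = a_B$, where $a_B$ transitions deterministically (no uncertainty) with probability $1/2$ to each of $s_1, s_2$ and earns zero immediate reward. Add two alternative actions $a_1, a_2$ at $s_0$ that transition deterministically to $s_1$ and $s_2$ respectively, each carrying an immediate bonus $\epsilon > 0$. At each $s_i$ the unique action has nominal $\widehat{P}(s^+ \mid s_i) = 1/2$ and an $\ell_1$-error budget $e(s_i, \cdot)$ chosen large enough so that nature may realize any $p_i \in [0,1]$. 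Fix any $\disc \in (0,1)$ and take $\epsilon$ sufficiently small.

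First I compute $\return(\polbase, \nat)$, which by symmetry depends on $\nat$ only through $p_1 + p_2$. Next, for each deterministic deviation $\pi(s_0) = a_i$ the regret $\return(\pi, \nat) - \return(\polbase, \nat)$ simplifies to $\epsilon$ plus a positive multiple of $p_i - p_{3-i}$; nature minimizes this by setting $p_i = 0$ and $p_{3-i} = 1$, so the worst-case regret is strictly negative once $\epsilon$ is small enough. Hence $\polbase$ is the unique deterministic policy with non-negative robust regret and the right-hand side of the displayed inequality vanishes. Finally, for $\pi(s_0) = \tfrac12 a_1 + \tfrac12 a_2$ the induced visitation distribution over $\{s_1, s_2\}$ matches that of $\polbase$, so the contribution of $p_1, p_2$ to $\return(\pi, \nat)$ and $\return(\polbase, \nat)$ is identical and cancels in the regret, leaving exactly $\epsilon > 0$ independently of $\nat$.

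The conceptual crux, and the only step requiring any thought, is this cancellation: because the $\ell_1$ uncertainties at $(s_1, \cdot)$ and $(s_2, \cdot)$ are independent under the $(x,a)$-rectangular structure, matching the baseline's visitation frequencies lets the randomized policy absorb nature's downside symmetrically, whereas any pure deviation overweights one $s_i$ and is punished accordingly. The remaining verifications are routine algebra around the discounted value formulae at $s_0$ and the chosen $\disc$, so I anticipate no further technical obstacle.
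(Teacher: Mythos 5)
Your proposal is correct and follows essentially the same strategy as the paper's proof, which also exhibits a small MDP (Example~\ref{exm:optimal_policy_stochastic}) in which every deterministic policy has worst-case regret exactly $0$ while a uniform randomization over two deviations hedges nature's two punishing realizations and guarantees a strictly positive improvement, so that the right-hand side vanishes and no finite $c$ can exist. Your counterexample is packaged differently (explicit $\ell_1$ uncertainty at downstream states rather than two abstract outcomes $\xi_1,\xi_2$), but the underlying hedging argument and the zero-versus-positive comparison are the same.
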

\begin{proof}
The proof follows directly from Example~\ref{exm:optimal_policy_stochastic}. The optimal policy in this example is randomized and achieves a guaranteed improvement $\zeta=1/2$. There is no deterministic policy that guarantees a positive improvement over the baseline policy, which proves the second part of the theorem.
\end{proof}

\begin{figure}
\begin{tabular}{cc}
	\begin{tikzpicture}[->,>=stealth',shorten >=1pt,auto,node distance=1.4cm,semithick]
	\tikzstyle{action}= [circle,fill=black,text=white]
	
	\node[state]        (s1)       					{$x_1$};
	\node[action]      (a1)         [above right of=s1]    {$a_1$};
	\node[action]      (a2)         [below right of=s1]    {$a_2$};
	
	\path (s1) edge[double] (a1) edge (a2);
	
	\node (o2) [right of=a2] {$2$};
	\path (a2) edge (o2);
	
	\node[state] (s11) [above right of=a1,xshift=0.7cm] {$x_{11}$};
	\node (s12) [below right of=a1,xshift=0.7cm] {$1$};
	
	\path (a1) edge node[above] {$\xi_1$} (s11);
	\path (a1) edge node[below] {$\xi_2$} (s12);
	
	\node (o111) [right of=s11,yshift=0.5cm]	{2};
	\node (o112) [right of=s11,yshift=-0.5cm]	{3};
	\path (s11) edge[double] node[above] {$a_{11}$} (o111) edge node[below] {$a_{12}$} (o112);
	
	\end{tikzpicture} \qquad\qquad\qquad
&	
	\begin{tikzpicture}[->,>=stealth',shorten >=1pt,auto,node distance=1.4cm,semithick]
	\tikzstyle{action}= [circle,fill=black,text=white]
	
	\node[state,initial] 		 (s1)                      {$x_0$};
	\node[action]      (a1)         [above right of=s1]    {$a_1$};
	\node[action]      (a2)         [below right of=s1]    {$a_2$};
	
	\node[state] (s2) [below right of=a1] {$x_1$};
	
	\node[action] (a11) [right of=s2,xshift=0mm] {$a_1$};
	
	\node (o1) [above right of=a11] {$+10/\gamma$};
	\node (o2) [below right of=a11] {$-10/\gamma$};
	
	\path (s1) edge node [above left]{$\pi_{\text{B}}$}  (a1) ;
	\path (a1) edge node[above right] {0} (s2);
	\path (a11) edge node [above left]{$\xi^\star$} (o1);
	\path (a11) edge node [below left]{$\xi_1$} (o2);
	
	\path (s1) edge node [below left]{$\pi^\star$}  (a2) ;
	\path (a2) edge node[below right] {1} (s2);
	
	\path (s2) edge node[above] {$\pi^\star$} node[below] {$\pi_{\text{B}}$} (a11);
	\end{tikzpicture}
\end{tabular}	
	\caption{\textbf{\textit{(left)}} A robust/uncertain MDP used in Example~\ref{exm:optimal_policy_stochastic} that illustrates the sub-optimality of deterministic policies in solving the optimization problem~\eqref{eq:objective_robust_interleave}. \textbf{\textit{(right)}} A Markov decision process with significant uncertainty in baseline policy.} \label{fig:optimal_stochastic}
\end{figure}
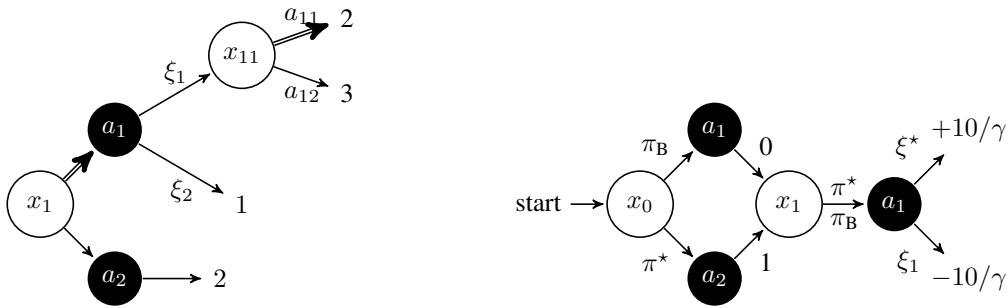

\begin{example} 
\label{exm:optimal_policy_stochastic}
Consider the robust/uncertain MDP on the left panel of~Figure~\ref{fig:optimal_stochastic} with states $\{x_1,x_{11}\}\subset\X$, actions $\A=\{a_1, a_2, a_{11}, a_{12}\}$, and discount factor $\gamma=1$. Actions $a_1$ and $a_2$ are shown as solid black nodes. A number with no state represents a terminal state with the corresponding reward. The robust outcomes $\{\xi_1,\xi_2\}$ correspond to the uncertainty set of transition probabilities $\Nat$. The baseline policy $\polbase$ is deterministic and is denoted by double edges. It can be readily seen from the monotonicity of the Bellman operator that any improved policy $\pol$ will satisfy $\pol(a_{12}|x_{11}) = 1$. Therefore, we will only focus on the policy at state $x_1$. The robust improvement as a function of $\pol(\cdot|x_1)$ and the uncertainties $\{\xi_1,\xi_2\}$ is given as follows:
\begin{align*}
\min_{\xi\in\Xi}\big(\return(\pol,\nat)-\return(\polbase,\nat)\big)
&=\min_{\xi\in\Xi}\left(\left[ \begin{array}{c|cc}
\pol~\backslash~\nat & \xi_1 & \xi_2 \\
\hline
a_1 & 3 & 1 \\
a_2 & 2 & 2 
\end{array} \right] - \left[\begin{array}{c|cc}
\pol~\backslash~\nat & \xi_1 & \xi_2 \\
\hline
a_1 & 2 & 1 
\end{array} \right]\right) \\ 
&=\min_{\xi\in\Xi}\left[\begin{array}{c|cc}
\pol~\backslash~\nat & \xi_1 & \xi_2 \\
\hline
a_1 & 1 & 0 \\
a_2 & 0 & 1 
\end{array} \right] = 0.
\end{align*}
This shows that no deterministic policy can achieve a positive improvement in this problem. However, a randomized policy $\pol(a_1|x_1)=\pol(a_2|x_1)=1/2$ returns the maximum improvement $\zeta=1/2$.
\end{example}

Randomized policies can do better than their deterministic counterparts, because they allow for hedging among various realizations of the MDP parameters. Example~\ref{exm:optimal_policy_stochastic} shows a problem such that there exists a realization of the parameters with improvement over the baseline when any deterministic policy is executed. However in this example, there is no single realization of parameters that provides an improvement for all the deterministic policies \emph{simultaneously}. Therefore, randomizing the policy guarantees an improvement independent of the parameters' choice. 

\subsection{Performance Bound}

Generally, one cannot compute the truly optimal policy $\pi^\star$ using an imprecise model. Nevertheless, it is still crucial to understand how errors in the model translates to a performance loss w.r.t.~an optimal policy. The following theorem provides a bound on the performance loss of any solution $\pi_S$ to the optimization problem~\eqref{eq:objective_robust_interleave}.

\begin{theorem}
\label{thm:perf_robust_interleave}
A solution $\polrobi$ to the optimization problem~\eqref{eq:objective_robust_interleave} is safe and its performance loss is bounded by the following inequality:
\begin{align*}
\Phi(\polrobi)&\stackrel{\Delta}{=}\return(\pol\opt,P^\star) - \return(\polrobi,P^\star) \le\min\! \left\{\! \frac{2 \gamma \rmax}{(1-\disc)^2}  \Big( \| e_{\pol\opt} \|_{1,u^\star_{\pi^\star}}  \!\!+\! \| e_{\polbase} \|_{1,u^\star_{\pi_B}} \!\Big),  \Phi(\polbase) \!\right\}, 
\end{align*}
where $u\opt_{\pi^\star}$ and $u^\star_{\pi_B}$ are the state occupancy distributions of the optimal and baseline policies in the true MDP $P^\star$. Furthermore, the above bound is tight. 
\end{theorem}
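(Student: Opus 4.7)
The plan is to split the theorem into three pieces: safety, the two separate upper bounds on $\Phi(\polrobi)$, and tightness.

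Safety and the $\Phi(\polbase)$ side of the minimum are essentially free. Safety was already observed right after Definition~\ref{def:robust_interleave}: since $\polbase$ itself is feasible in \eqref{eq:objective_robust_interleave} and achieves objective value $0$, the optimum $\zeta$ is nonnegative, hence $\return(\polrobi,P^\star)-\return(\polbase,P^\star)\ge \min_{\xi\in\Xi}(\return(\polrobi,\xi)-\return(\polbase,\xi))\ge 0$ because $P^\star\in\Nat$. Subtracting $\return(\polrobi,P^\star)$ from $\return(\pol\opt,P^\star)$ on both sides gives $\Phi(\polrobi)\le\Phi(\polbase)$, which is the second branch of the $\min$.

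For the simulation-lemma branch, I would start from the robust optimality of $\polrobi$:
\begin{equation*}
\min_{\xi\in\Nat}\bigl(\return(\polrobi,\xi)-\return(\polbase,\xi)\bigr)\;\ge\;\min_{\xi\in\Nat}\bigl(\return(\pol\opt,\xi)-\return(\polbase,\xi)\bigr).
\end{equation*}
Let $\tilde\xi$ attain the right-hand minimum. Since $P^\star\in\Nat$, evaluating the left side at $P^\star$ is at least the left minimum, so
\begin{equation*}
\return(\polrobi,P^\star)-\return(\polbase,P^\star)\;\ge\;\return(\pol\opt,\tilde\xi)-\return(\polbase,\tilde\xi).
\end{equation*}
Rearranging into $\Phi(\polrobi)=\return(\pol\opt,P^\star)-\return(\polrobi,P^\star)$ yields
\begin{equation*}
\Phi(\polrobi)\;\le\;\bigl[\return(\pol\opt,P^\star)-\return(\pol\opt,\tilde\xi)\bigr]+\bigl[\return(\polbase,\tilde\xi)-\return(\polbase,P^\star)\bigr].
\end{equation*}
Each bracket is now a single-policy return difference between two models that both lie in $\Nat$, hence differ pointwise in $\ell_1$ by at most $2e(x,a)$ by the triangle inequality and Assumption~\ref{asm:error}. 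Applying the standard simulation lemma (expand $\return(\pol,P)-\return(\pol,P')=\gamma\sum_{x,a}u^{P}_{\pol}(x)\pol(a|x)(P-P')(\cdot|x,a)^\top \val^{P'}_{\pol}$, bound $\|\val\|_\infty$ by $\rmax/(1-\disc)$ after centering $\val$, and use $\|P-P'\|_1\le 2e$) to each bracket, with $P^\star$ playing the role that produces the occupancy weights, gives the two summands $\frac{2\disc\rmax}{(1-\disc)^2}\|e_{\pol\opt}\|_{1,u^\star_{\pi^\star}}$ and $\frac{2\disc\rmax}{(1-\disc)^2}\|e_{\polbase}\|_{1,u^\star_{\pi_B}}$. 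Combined with the $\Phi(\polbase)$ bound above, this proves the stated inequality.

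Tightness can be shown by a small construction where the robust argmax is forced to coincide with $\polbase$ even though $\pol\opt$ is strictly better under $P^\star$; a two-state example in the spirit of the right panel of Figure~\ref{fig:optimal_stochastic}, where large error on the $\polbase$-branch makes any deviation unsafe in the robust sense while $\pol\opt$ would gain $\Phi(\polbase)$, matches both branches of the $\min$ for appropriate parameter choices. The main obstacle in the proof is the simulation-lemma step: one has to be careful that the occupancy measure in the bound is taken under $P^\star$ (not under $\tilde\xi$), which is exactly what the expansion direction allows, and that the centering of $\val$ gives the stated constant $2\disc\rmax/(1-\disc)^2$ rather than something larger.
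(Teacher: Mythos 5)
Your argument is correct and follows essentially the same route as the paper: safety and the $\Phi(\polbase)$ branch come from feasibility of $\polbase$ in \eqref{eq:objective_robust_interleave}, and the main branch comes from chaining the optimality of $\polrobi$ over $\pol\opt$ with the simulation lemma applied separately to $\pol\opt$ and $\polbase$ (your choice to keep the joint minimizer $\tilde\xi$ instead of splitting into $\min_\xi\return(\pol\opt,\xi)-\max_\xi\return(\polbase,\xi)$ is an immaterial variant that yields the same final constant). The only incomplete piece is tightness, which you gesture at rather than construct; the paper gives an explicit deterministic two-action example with terminal returns $1$, $1+\epsilon$, $1+\epsilon$, $1+2\epsilon$ and sets $\epsilon=\frac{2\gamma\rmax}{(1-\gamma)^2}\|e_{\pol\opt}\|_{1,u^\star_{\pi^\star}}$, which is exactly the kind of construction your sketch describes.
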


The proof of Theorem~\ref{thm:perf_robust_interleave} is available in Appendix~\ref{sec:proof-bound}. 

\subsection{Computational Complexity}

In this section, we analyze the computational complexity of solving the optimization problem~\eqref{eq:objective_robust_interleave} and prove that the problem is NP-hard. In particular, we proceed by showing that the following sub-problem of~\eqref{eq:objective_robust_interleave}, for a fixed $\pol\in\Polrand$, is NP-hard:
\begin{equation}  
\label{eq:obj_interleave_inner}
\arg\min_{\nat\in\Nat} ~\Bigl( \return(\pol, \nat) - \return(\polbase, \nat)\Bigr).
\end{equation}
The optimization problem~\eqref{eq:obj_interleave_inner} can be interpreted as computing a policy that simultaneously minimizes the returns of two MDPs, whose transitions induced by policies $\pol$ and $\polbase$. The proof of Theorem~\ref{thm:complexity_joint} is given in Appendix~\ref{sec:proof-complexity}.

\begin{theorem} 
\label{thm:complexity_joint}
Both optimization problems~\eqref{eq:objective_robust_interleave} and~\eqref{eq:obj_interleave_inner} are NP-hard.
\end{theorem}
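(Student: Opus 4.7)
The plan is to prove both parts by a polynomial-time reduction from 3-SAT, first to the inner sub-problem~\eqref{eq:obj_interleave_inner} and then, via a small padding argument, to~\eqref{eq:objective_robust_interleave}. Given a 3-SAT formula $\phi$ with $n$ variables and $m$ clauses, I would construct an acyclic MDP whose states partition into \emph{variable gadgets} (one per variable) and \emph{clause gadgets} (one per clause). The $(x,a)$-rectangular $\ell_1$ uncertainty is chosen so that at each variable-gadget state $s_i$ the two extreme points of the $\ell_1$ ball around $\nat(\cdot|s_i,a_i)$ encode the two truth values of $x_i$. Both $\pi$ and $\polbase$ use the \emph{same} action at every $s_i$, so that nature's choice there is felt by both returns simultaneously. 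At each clause gadget the two policies differ: $\pi$ collects a unit reward iff the assignment encoded by $\nat$ satisfies that clause, whereas $\polbase$ collects a fixed, $\nat$-independent reward. With $\indist$ placing equal mass on the $m$ clause gadgets, the objective evaluates, up to a known additive constant, to the fraction of unsatisfied clauses, so $\min_\nat$ in~\eqref{eq:obj_interleave_inner} equals $0$ if and only if $\phi$ is satisfiable.

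The principal obstacle is that $\nat \mapsto \return(\pi,\nat)-\return(\polbase,\nat)$ is rational, not linear, in $\nat$, so a priori nature could strictly prefer fractional values of $\nat(\cdot|s_i,a_i)$ and thereby corrupt the encoded assignment. I would neutralise this by making the full MDP a directed acyclic graph, so that the Bellman recursion becomes affine in each local $\nat(\cdot|s_i,a_i)$---successor values are determined by strictly deeper (independent) nature choices and so are fixed when optimising the layer above---and the minimum over the $\ell_1$ ball is then attained at an extreme (Boolean) point. A short inductive argument from the terminal states upwards confirms that every optimal $\nat$ is Boolean at every variable gadget, yielding a genuine Boolean assignment.

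For~\eqref{eq:objective_robust_interleave} I would prepend a short fragment in which a single initial state's forced action routes the outer policy into the reduction instance, with the prescribed $\pi$ as its uniquely non-dominated continuation regardless of $\nat$ (for instance, by pairing it against a dominated sink of known, worse min-value). The outer max-min then collapses to the inner min on the constructed instance, so a polynomial-time solver for~\eqref{eq:objective_robust_interleave} would decide 3-SAT. Hence both problems are NP-hard.
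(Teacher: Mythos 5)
There is a fundamental flaw in your reduction for the inner problem~\eqref{eq:obj_interleave_inner}. You arrange matters so that $\return(\polbase,\nat)$ is constant in $\nat$ (the baseline ``collects a fixed, $\nat$-independent reward'' at the clause gadgets, and at the variable gadgets both policies take the same action so that contribution cancels in the difference). But then your instance reduces to $\min_{\nat\in\Nat}\return(\pol,\nat)$ plus a constant, which is ordinary robust policy evaluation over an $(x,a)$-rectangular set --- a problem solvable in polynomial time (this is exactly the tractable regime of Proposition~\ref{prop:tractable_known}, and the paper cites strongly polynomial algorithms for it). So either your construction cannot actually encode 3-SAT, or it would prove $\mathrm{P}=\mathrm{NP}$. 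The place where it breaks is the consistency of the assignment: if a variable's truth value is stored as nature's choice at a \emph{single} state--action pair, the Markov property forces the continuation out of that state to be the same regardless of which clause is ``reading'' it, so you cannot route several clause gadgets through one variable state with clause-specific continuations; and if you instead give each clause its own copy of the variable states, rectangularity lets nature set the same variable inconsistently across copies, and every clause becomes satisfiable. Your DAG/extreme-point argument addresses linearity in $\nat$, which is a side issue; it does not address consistency.

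The paper's proof is built precisely around this obstacle. The single adversarial choice $\nat$ is viewed as one policy that must act in \emph{two different} MDPs sharing a state--action space: $\mathcal{M}_1$ (induced by $\pol$, Figure~\ref{fig:nphard_mdp1}), in which reaching the reward $-1$ requires satisfying every clause, and $\mathcal{M}_2$ (induced by $\polbase$, Figure~\ref{fig:nphard_mdp2}), in which reaching the reward $+1$ requires the literal-level choices to be \emph{consistent} with a single truth assignment (inconsistent choices are trapped in zero-reward self-loops). Since nature minimizes $\return(\pol,\nat)-\return(\polbase,\nat)$, it must succeed in both MDPs simultaneously, and the value $-2$ is attainable iff the formula is satisfiable. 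In other words, the hardness lives entirely in the coupling through $\polbase$'s $\nat$-dependent return, which your construction removes. To repair your proof you would need to make $\return(\polbase,\nat)$ play the role of the consistency checker (or otherwise couple the per-clause copies of a variable), at which point you essentially recover the paper's two-MDP construction; the outer-problem padding you sketch is then fine in spirit and matches the paper's addition of a dominated second action with rewards shifted down so that the regret-maximizing $\pol$ is forced.
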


Although the optimization problem~\eqref{eq:objective_robust_interleave} is NP-hard in general, but it can be tractable under certain conditions. The following proposition shows that this is the case, for example, when the Markov chain induced by the baseline policy is known precisely. 

\begin{proposition} 
\label{prop:tractable_known}
Assume that for each $x\in\X$, the error function induced by the baseline policy is zero, i.e., $e\big(x,\pi_B(x)\big)=0$.\footnote{Note that this is equivalent to precisely knowing the Markov chain induced by the baseline policy $P^\star_{\pi_B}$.} Then, the optimization problem~\eqref{eq:objective_robust_interleave} is equivalent to the following problem and can be solved in polynomial time:
\begin{equation} \label{eq:exact1}
\arg \max_{\pi \in\Polrand} \min_{\nat\in\Nat} \; \return(\pi, \nat).
\end{equation}
\end{proposition}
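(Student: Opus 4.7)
The plan is to show that under the stated assumption, the inner minimization over $\nat\in\Nat$ decouples, because $\return(\polbase,\nat)$ becomes a constant independent of $\nat$. The optimization then collapses to a standard robust MDP, which is known to be polynomially solvable under $(x,a)$-rectangular $\ell_1$ uncertainty.

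First I would unpack the structure of $\Nat$ and use the rectangularity together with $e\bigl(x,\polbase(x)\bigr)=0$. Since $\Nat$ is $(x,a)$-rectangular, every $\nat\in\Nat$ satisfies $\|\nat(\cdot|x,a)-\widehat{P}(\cdot|x,a)\|_1\le e(x,a)$ for each $(x,a)$ independently. Applied to $a=\polbase(x)$, the zero error forces $\nat\bigl(\cdot\mid x,\polbase(x)\bigr)=\widehat{P}\bigl(\cdot\mid x,\polbase(x)\bigr)$ for all $x\in\X$ and all $\nat\in\Nat$. Because $\polbase$ is deterministic, the value function $\val_{\polbase}^{\nat}$ depends on $\nat$ only through the entries $\nat(\cdot\mid x,\polbase(x))$ (this is clear from the Bellman equation $\val_{\polbase}^{\nat}(x)=r(x,\polbase(x))+\disc\sum_{y}\nat(y\mid x,\polbase(x))\val_{\polbase}^{\nat}(y)$). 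Hence $\val_{\polbase}^{\nat}$, and therefore $\return(\polbase,\nat)=\indist^{\top}\val_{\polbase}^{\nat}$, is a constant $c$ independent of $\nat\in\Nat$.

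Next I would rewrite \eqref{eq:objective_robust_interleave} using this observation:
\begin{equation*}
\max_{\pol\in\Polrand}\min_{\nat\in\Nat}\Bigl(\return(\pol,\nat)-\return(\polbase,\nat)\Bigr)
\;=\;\max_{\pol\in\Polrand}\min_{\nat\in\Nat}\return(\pol,\nat)\;-\;c,
\end{equation*}
so the $\arg\max_{\pol}$ coincides with the $\arg\max_{\pol}$ of \eqref{eq:exact1}. This establishes equivalence of the two optimization problems in the sense claimed.

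Finally I would invoke the polynomial-time solvability of \eqref{eq:exact1}. Because $\Nat$ is $(x,a)$-rectangular with $\ell_1$-ball constraints, \eqref{eq:exact1} is a standard robust MDP of the form treated in~\cite{Iyengar2005,Nilim2005}: the robust Bellman operator $(\mathcal{T}v)(x)=\max_{a}\bigl(r(x,a)+\disc\min_{\nat(\cdot|x,a)\in\mathcal{B}_{1}(\widehat{P}(\cdot|x,a),e(x,a))}\nat(\cdot|x,a)^{\top}v\bigr)$ is a $\disc$-contraction, and each inner $\ell_1$-constrained linear minimization reduces to a one-dimensional search computable in $\widetilde{O}(|\X|)$ time. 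Robust value iteration (or the equivalent LP) therefore yields an $\epsilon$-optimal, and in fact an exact, policy in time polynomial in $|\X|$, $|\A|$, $\log(1/(1-\disc))$, and the bit-size of the input, which gives the claim.

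The main (minor) obstacle is the rigorous step that $\return(\polbase,\nat)$ is truly constant across $\Nat$: this relies on the combination of three ingredients (rectangularity, zero error on baseline actions, and $\polbase$ being deterministic), and I would present it as a short lemma on the value function before invoking it. Everything else is a routine reduction to classical robust dynamic programming.
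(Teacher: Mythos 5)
Your proposal is correct and follows essentially the same route as the paper's proof: zero error on the baseline actions pins down $\nat\bigl(\cdot\mid x,\polbase(x)\bigr)=\widehat{P}\bigl(\cdot\mid x,\polbase(x)\bigr)$, which makes $\return(\polbase,\nat)$ a constant over $\Nat$, reducing \eqref{eq:objective_robust_interleave} to the standard $\ell_1$-rectangular robust MDP \eqref{eq:exact1}, which is polynomially solvable. The only difference is that you justify the tractability step via the robust Bellman contraction argument while the paper simply cites known (strongly) polynomial-time results for this class of RMDPs; both are adequate.
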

\begin{proof}
The hypothesis in the proposition implies that for any $\xi\in\Xi(\widehat{P},e)$, we have $\xi\big(\cdot|x,\pi_B(x)\big) = \widehat{P}\big(\cdot|x,\pi_B(x)\big)$, $\forall x\in\X$. This further indicates that $\rho(\pi_B,\xi)$ is a constant (independent of $\xi$), for all $\xi\in\Xi(\widehat{P},e)$. Thus, when the Markov chain induced by the baseline policy is known, the optimization problem~\eqref{eq:objective_robust_interleave} is reduced to the optimization problem~\eqref{eq:exact1}, which is a robust MDP (RMDP) problem with $\ell_1$-constraint uncertainty set. It is known that this class of RMDP problems can be solved in (strongly) polynomial time~\cite{Hansen2013} and has also been solved efficiently in practice~\cite{Petrik2014}.
\end{proof}

\subsection{Approximate Algorithm} \label{sub:heuristic}

Solving for the optimal solution of~\eqref{eq:objective_robust_interleave} may not be possible in practice since the problem is NP hard. In this section, we propose a simple and practical approximate algorithm. The empirical results of~\cref{sec:experiments} indicate that this algorithm holds promise and they also suggest that the approach may be a good starting point for building better approximate algorithms in the future.

\begin{algorithm}
	\SetKwInOut{Input}{input}\SetKwInOut{Output}{output}
	\Input{Empirical transition probabilities: $\widehat{P}$, baseline policy $\pi_B$, and the error function $e$} 
	\Output{Policy $\tilde{\pi}_S$}
	\ForEach{$x\in\mathcal{X},a\in\mathcal{A}$}{
		$\tilde{e}(x,a) \leftarrow \begin{cases} 
		e(x,a)  &\text{when } \polbase(x) \neq a \\ 
		0  &\text{otherwise} \end{cases}$ \;}
	$\tilde{\pi}_S \leftarrow \argmax_{\pi\in\Pi_R} \min_{\nat\in\Nat(\widehat{P},\tilde{e})} \left( \rho\big(\pol,\nat\big)  - \rho\big(\polbase,\nat\big)\right)$ \;
	\Return $\tilde{\pi}_S$
	\caption{Approximate Robust Baseline Regret Minimization Algorithm}  \label{alg:approximate_robust_joint}
\end{algorithm}

\cref{alg:approximate_robust_joint} contains the pseudocode of the proposed approximate method. The main idea is to use a modified uncertainty model by assuming no error in transition probabilities of the baseline policy. Then it is possible to minimize the robust baseline regret in polynomial time as \cref{prop:tractable_known} shows. Assuming no error in baseline transition probabilities is reasonable because of two main reasons. First, data is in practice often generated by executing  the baseline policy and therefore we may have enough data for a good approximation its transition probabilities: ~$\forall x\in\X,\widehat{P}\big(\cdot|x,\pi_B(x)\big) \approx P^\star\big(\cdot|x,\pi_B(x)\big)$. Second, transition probabilities often affect baseline and improved policies similarly and therefore have little effect on the difference between their returns (i.e., the regret). See \cref{sub:regret_example} for an example of such behavior.



\newcommand{\vbase}{v_{\text{B}}}
\newcommand{\qbase}{v_{\text{B}}}
\newcommand{\uset}[1]{\mathcal{U}(#1)}
\section{Standard Policy Improvement Methods} 
\label{sec:algorithms}

In Section~\ref{sec:model}, we showed that finding an exact solution to the optimization problem~\eqref{eq:objective_robust_interleave} is computationally expensive and proposed an approximate algorithm. In this section, we describe and analyze two standard methods for computing safe policies and show how they can be interpreted as an approximation of our proposed baseline regret minimization. Due to space limitations, we describe another method, called reward-adjusted MDP, in Appendix~\ref{sec:reward-adjusted}, but report its performance in Section~\ref{sec:experiments}.



\subsection{Solving the Simulator}
\label{subsub:simulator}

The most straightforward solution to~\eqref{eq:objective_robust_interleave} is to simply assume that our {\em simulator} is accurate and to solve the reward maximization problem of a MDP with the transition probability $\widehat{P}$, i.e.,~$\pi_{\text{sim}}\in\argmax_{\pi\in\Polrand}\rho(\pi,\widehat{P})$.
%
Theorem~\ref{thm:perf-simulator} quantifies the performance loss of the resulted policy $\pi_{\text{sim}}$.

\begin{theorem}  
\label{thm:perf-simulator}
Let $\pi_{\text{sim}}$ be an optimal policy of the reward maximization problem of a MDP with transition probability $\widehat{P}$. Then under Assumption~\ref{asm:error}, the performance loss of $\pi_{\text{sim}}$ is bounded by
\begin{equation*}
\Phi(\pi_{\text{sim}})\stackrel{\Delta}{=}\return(\pi^\star,P^\star) - \return(\pi_{\text{sim}},P^\star) \le \frac{2 \gamma \rmax}{(1-\disc)^2}  \| e \|_\infty.  
\end{equation*}
\end{theorem}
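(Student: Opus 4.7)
The plan is the standard "simulator/simulation lemma" decomposition. I would write the performance gap as a telescoping sum
\begin{equation*}
\return(\pi^\star,P^\star) - \return(\pi_{\text{sim}},P^\star) = \bigl[\return(\pi^\star,P^\star) - \return(\pi^\star,\widehat{P})\bigr] + \bigl[\return(\pi^\star,\widehat{P}) - \return(\pi_{\text{sim}},\widehat{P})\bigr] + \bigl[\return(\pi_{\text{sim}},\widehat{P}) - \return(\pi_{\text{sim}},P^\star)\bigr],
\end{equation*}
and immediately drop the middle bracket, which is non-positive since $\pi_{\text{sim}}$ is optimal for $\widehat{P}$. The whole task then reduces to establishing a uniform bound $|\return(\pi,P^\star) - \return(\pi,\widehat{P})| \le \frac{\gamma R_{\max}}{(1-\disc)^2}\|e\|_\infty$ for every $\pi\in\Polrand$, and applying it to $\pi^\star$ and to $\pi_{\text{sim}}$ to collect a factor of $2$.

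To prove the simulation bound, I would compare the value functions $v_\pi^{P^\star}$ and $v_\pi^{\widehat{P}}$ state-wise via the Bellman equations. Writing
\begin{equation*}
v_\pi^{P^\star}(x) - v_\pi^{\widehat{P}}(x) = \disc\sum_{a}\pi(a|x)\Bigl(\sum_{x'}\bigl(P^\star(x'|x,a)-\widehat{P}(x'|x,a)\bigr)v_\pi^{P^\star}(x') + \sum_{x'}\widehat{P}(x'|x,a)\bigl(v_\pi^{P^\star}(x')-v_\pi^{\widehat{P}}(x')\bigr)\Bigr),
\end{equation*}
I would Hölder-bound the first inner sum using Assumption~\ref{asm:error} and $\|v_\pi^{P^\star}\|_\infty \le R_{\max}/(1-\disc)$, while the second is a $\disc$-contraction in $\|v_\pi^{P^\star}-v_\pi^{\widehat{P}}\|_\infty$. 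Taking $\|\cdot\|_\infty$ on both sides and rearranging yields
\begin{equation*}
\|v_\pi^{P^\star}-v_\pi^{\widehat{P}}\|_\infty \le \frac{\disc}{1-\disc}\,\|e\|_\infty\,\|v_\pi^{P^\star}\|_\infty \le \frac{\disc R_{\max}}{(1-\disc)^2}\,\|e\|_\infty.
\end{equation*}
Since $\return(\pi,P) = p_0^\top v_\pi^P$ and $p_0$ is a probability distribution, the same bound transfers to $|\return(\pi,P^\star)-\return(\pi,\widehat{P})|$.

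Plugging this into the two surviving brackets of the telescoping decomposition gives the claimed factor $2\disc R_{\max}/(1-\disc)^2 \cdot \|e\|_\infty$. The main obstacle is purely bookkeeping: getting the constant $\disc/(1-\disc)^2$ correct rather than a slightly weaker $1/(1-\disc)^2$, which requires being careful that $(P^\star-\widehat{P})$ multiplies a value function (bounded by $R_{\max}/(1-\disc)$) rather than a reward, so the factor of $\disc$ discount comes out front. Everything else is standard contraction manipulation and the $\ell_1$/$\ell_\infty$ Hölder inequality licensed by Assumption~\ref{asm:error}.
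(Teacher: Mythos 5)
Your proposal is correct and follows essentially the same route as the paper: the same decomposition exploiting the optimality of $\pi_{\text{sim}}$ under $\widehat{P}$, followed by a simulation bound applied once to $\pi^\star$ and once to $\pi_{\text{sim}}$ to collect the factor of $2$. The only cosmetic difference is that the paper derives the simulation bound in the sharper occupancy-weighted form $\frac{\gamma R_{\max}}{1-\gamma}\,p_0^\top(\eye-\gamma P^\star_\pi)^{-1}e_\pi$ (via its Lemma~\ref{cor:policy_tran_error}) and only relaxes to $\|e\|_\infty$ by H\"older at the last step, whereas you establish the $\ell_\infty$ bound directly through a contraction argument on the Bellman equation; both yield the identical final constant.
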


The proof is available in Appendix~\ref{sec:proof-perf-simulator}. Note that there is no guarantee that $\pi_{\text{sim}}$ is \emph{safe}, and thus, deploying it may lead to undesirable outcomes due to model uncertainties. Moreover, the performance guarantee of $\pi_{\text{sim}}$, reported in Theorem~\ref{thm:perf-simulator}, is weaker than that in Theorem~\ref{thm:perf_robust_interleave} for the solution to our proposed optimization problem~\eqref{eq:objective_robust_interleave}.

\cref{thm:perf_robust} indicates that the policy $\pi_R$ returned by Algorithm~\ref{alg:robust} is {\em safe} and has a tighter bound on its performance loss than $\pi_{\text{sim}}$. This is because Theorem~\ref{thm:perf_robust} depends on a weighted $\ell_1$-norm of the errors in the optimal policy, instead of the $\ell_\infty$-norm over all policies in Theorem~\ref{thm:perf-simulator}. 

\subsection{Solving Robust MDP}
\label{subsubsec:robust}

Another standard solution to the problem in~\eqref{eq:objective_robust_interleave} is based on solving the RMDP problem~\eqref{eq:exact1}. We prove that the policy returned by this algorithm is {\em safe} and has better (sharper) worst-case guarantees than the simulator-based policy $\pi_{\text{sim}}$. Details of this algorithm are summarized in Algorithm~\ref{alg:robust}. The algorithm first constructs and solves an RMDP. It then returns the solution policy if its worst-case performance over the uncertainty set is better than the robust performance $\max_{\nat\in\Nat}\rho(\pi_B,\nat)$, and it returns the baseline policy $\pi_B$, otherwise. 

\begin{algorithm}
\SetKwInOut{Input}{input}\SetKwInOut{Output}{output}
\Input{Simulated MDP $\widehat{P}$, baseline policy $\pi_B$, and the error function $e$} 
\Output{Policy $\polrob$}
$\pi_0 \leftarrow \argmax_{\pi\in\Pi_R} \min_{\nat\in\Nat(\widehat{P},e)}\rho\big(\pi,\nat\big)$ \;
\leIf{$\min_{\nat\in\Nat(\widehat{P},e)} \rho\big(\pi_0,\nat\big) > \max_{\nat\in\Nat}\rho(\pi_B,\nat)$}{
\Return $\pi_0$}{\Return $\polbase$ \label{ln:algrob_condition}}
\caption{RMDP-based Algorithm} 
\label{alg:robust}
\end{algorithm}

Algorithm~\ref{alg:robust} makes use of the following approximation to the solution of~\eqref{eq:objective_robust_interleave}: 
\begin{equation*}
\max_{\pi \in\Pi_R} \min_{\nat\in\Nat} \Bigl( \return(\pi, \nat) - \return(\pi_B, \nat)\Bigr)\geq \max_{\pi \in\Pi_R} \min_{\nat\in\Nat} \return(\pi, \nat) -  \max_{\nat\in\Nat}\return(\pi_B, \nat), 
\end{equation*}
and guarantees safety by designing $\pi$ such that the RHS of this inequality is always non-negative.

The performance bound of $\polrob$ is identical to that in Theorem~\ref{thm:perf_robust_interleave}, and is stated and proved in \cref{thm:perf_robust} in Appendix~\ref{sec:proof-perf-robust}. However even though the worst-case bounds are the same, we show in \cref{sub:regret_example} that the performance loss of $\polrob$ may be worse than $\polrobi$ by an arbitrarily large margin.

It is important to discuss the difference between \cref{alg:approximate_robust_joint,alg:robust}. Although both solve an RMDP, they use different uncertainty sets $\Xi$. The uncertainty set used in \cref{alg:robust} is the true error function in building the simulator, while the uncertainty set used in \cref{alg:approximate_robust_joint} assumes that the error function is zero for all the actions suggested by the baseline policy. As a result, both algorithms approximately solve \eqref{eq:objective_robust_interleave} but approximate the problem in different ways.

\section{Experimental Evaluation} \label{sec:experiments}

In this section, we experimentally evaluate the benefits of minimizing the robust baseline regret. First, we demonstrate that solving the problem in \eqref{eq:objective_robust_interleave} may outperform the regular robust formulation by an arbitrarily large margin. Then, in the remainder of the section, we compare the solution quality of \cref{alg:approximate_robust_joint} with simpler methods in more complex and realistic experimental domains. The purpose of our experiments is to show how solution quality depends on the degree of model uncertainties.

\subsection{An Illustrative Example} \label{sub:regret_example}

%
%
%
%
%
%
%

Consider the example depicted on the right panel of Figure~\ref{fig:optimal_stochastic}. White nodes represent states and black nodes represent state-action pairs. Labels on the edges originated from states indicate the policy according to which the action is taken; labels on the edges originated from actions denote the rewards and, if necessary, the name of the uncertainty realization. The baseline policy is $\polbase$, the optimal policy is $\pol\opt$, and the discount factor is $\gamma \in (0,1)$.

This example represents a setting in which the level of uncertainty varies significantly across the individual states: the transitions model is precise in state $x_0$ and uncertain in state $x_1$. The baseline policy $\polbase$ takes a suboptimal action in state $x_0$ and the optimal action in the uncertain state $x_1$. To prevent being overly conservative in computing a safe policy, one needs to consider that the realization of uncertainty in $x_1$ influences both the baseline and improved policies. 

Using the plain robust optimization formulation in Algorithm \ref{alg:robust}, even the optimal policy $\pol\opt$ is not considered safe in this example. In particular, the robust return of $\pol\opt$ is $\min_\xi \return(\pol\opt,\xi) = -9$, while the optimistic return of $\polbase$ is $\max_\xi \return(\polbase,\xi) = +10$. On the other hand, solving \eqref{eq:objective_robust_interleave} will return the optimal policy since: 
$  \min_\xi \return(\pol\opt,\xi) - \return(\polbase,\xi) = 11 - 10 = -9 - (-10) = 1$. Even the heuristic method of Section~\ref{sub:heuristic} will return the optimal policy. Note that since the reward-adjusted formulation (see its description in~\cref{sec:reward-adjusted}) is even more conservative than the robust formulation, it will also fail to improve on the baseline policy. 

\subsection{Example Grid Problem}

In this section, we use a simple grid problem to compare the solution quality of \cref{alg:approximate_robust_joint} with simpler methods. The grid problem is motivated by modeling customer interactions with an online system. States in the problem represent a two dimensional grid. Columns capture states of interaction with the website, and rows capture customer states such as overall satisfaction. Actions can move customers along either dimension with some probability of failure. A more detailed description of this domain is provided in Section \ref{app:domains}.

Our goal is to evaluate how the solution quality of the various methods depends on the magnitude of model error $e$. The model is constructed from samples and thus the magnitude of the error depends on the number of samples used to build the model. We use a uniform random policy to gather samples. Model error function $e$ is then constructed from this simulated data using bounds in Section \ref{sec:tech-lemma}. The baseline policy is constructed to be optimal when ignoring the row part of state; see Section \ref{app:domains} for more details. 

All methods are compared in terms of the improvement percentage in total return over the baseline policy. \cref{fig:result} depicts the results as a function of the number of transition samples used in constructing the uncertain model and represent the mean of $40$ runs. Methods used in the comparison are as follows: 1) EXP represents solving the nominal model as described in \cref{subsub:simulator}, 2) RWA represent the reward-adjusted formulation in \cref{alg:reward-adjusted}, 3) ROB represents the robust method in \cref{alg:robust}, and 4) RBC represents the approximate algorithm in \cref{alg:approximate_robust_joint}.  

\begin{figure}
	\centering
	\vspace{-0.1in}
	\includegraphics[width=0.45\linewidth]{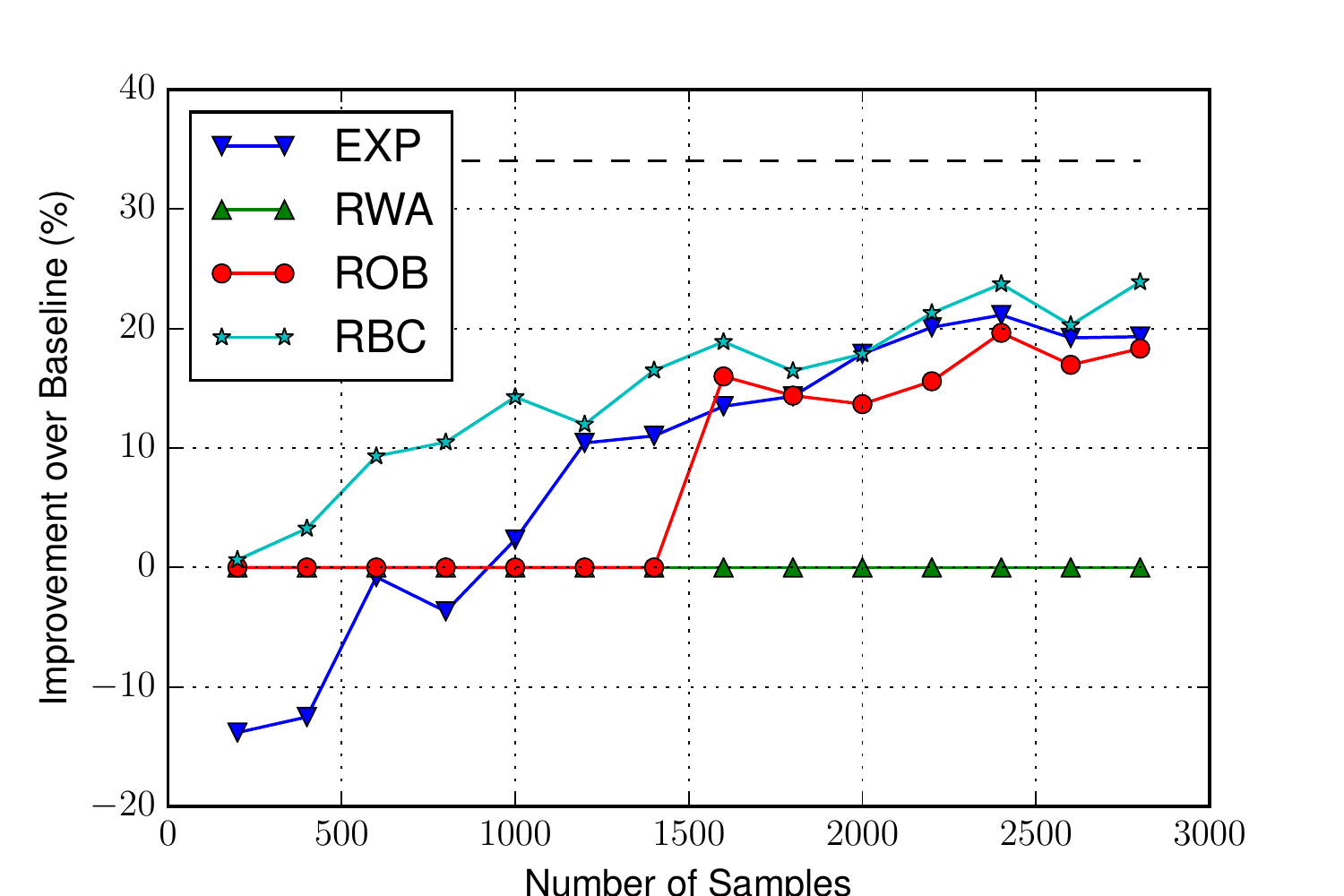}
	\vspace{-0.1in}
	\caption{Improvement in return over the baseline policy for the proposed methods. The dashed line shows the return of the optimal policy. } 
	\label{fig:result}
	\vspace{-0.15in}
\end{figure}

\cref{fig:result} shows that \cref{alg:approximate_robust_joint} not only reliably computes policies that are safe, but also significantly improves on the quality of the baseline policy when the model error is large. When the number of samples is small, \cref{alg:approximate_robust_joint} is significantly better than other methods by relying on the baseline policy in states with a large model error and only taking improving actions when the model error is small. Note that EXP  can be significantly worse that the baseline policy, especially when the number of samples is small.

\vspace{-0.1in}
\subsection{Energy Arbitrage}
\vspace{-0.1in}
In this section, we compare model-based policy improvement methods using a more complex domain. The problem is to determine an energy arbitrage policy in given limited energy storage (a battery) and stochastic prices. At each time period, the decision maker observes the available battery charge and a Markov state of energy price and decides on the amount of energy to purchase or to sell. 

The set of states in the energy arbitrage problem consists of three components: current state of charge, current capacity, and a Markov state representing price; the actions represent the amount of energy purchased or sold; the rewards indicate profit/loss in the transactions. We discretize the state of charge and action sets to 10 separate levels. The problem is based on the domain from \cite{Petrik2015} whose description is detailed in Appendix \ref{app:energy_arbitrage}. 

Energy arbitrage is a good fit for model-based approaches because it combines known and unknown dynamics. Physics of battery charging and discharging can be modeled with high confidence, while the evolution of energy prices is uncertain. As a result, using an explicit battery model the only uncertainty is in transition probabilities between the 10 states of the price process instead of the entire 1000 state-action pairs. This significantly reduces the number of samples needed to compute a good solution.

A realistic baseline policy is constructed by solving a high-precision version of the discretized problem in which the price process is aggregated to 3 levels from 10. This baseline policy represents a realistic but simplified solution. Because low energy prices are more commonly sampled than high energy prices, the degree of uncertainty varies significantly over the state space.

As in the previous application, we estimate the uncertainty model in a data-driven manner. Notice that the inherent uncertainty is only on price transitions, and is independent to the policy used (which controls the storage dynamics). Here the uncertainty set of transition probabilities is estimated by the method in Section \ref{subsec:sampling_bounds} but the uncertainty set is only a non-singleton with respect to price states. Figure \ref{fig:energy_results} shows the percentage improvement on the baseline policy averaged over 5 runs, whose labels of policies follow the definitions of Figure \ref{fig:result}. We clearly observe that the heuristic RBC method---described in Section \ref{sub:heuristic}---effectively interleaves the baseline policy (in states with low level of uncertainty) and an improved policy (in states with low level of uncertainty) and results in the best performance in most cases. 

\begin{figure}
	\centering
	\vspace{-0.15in}
	\includegraphics[width=0.45\linewidth]{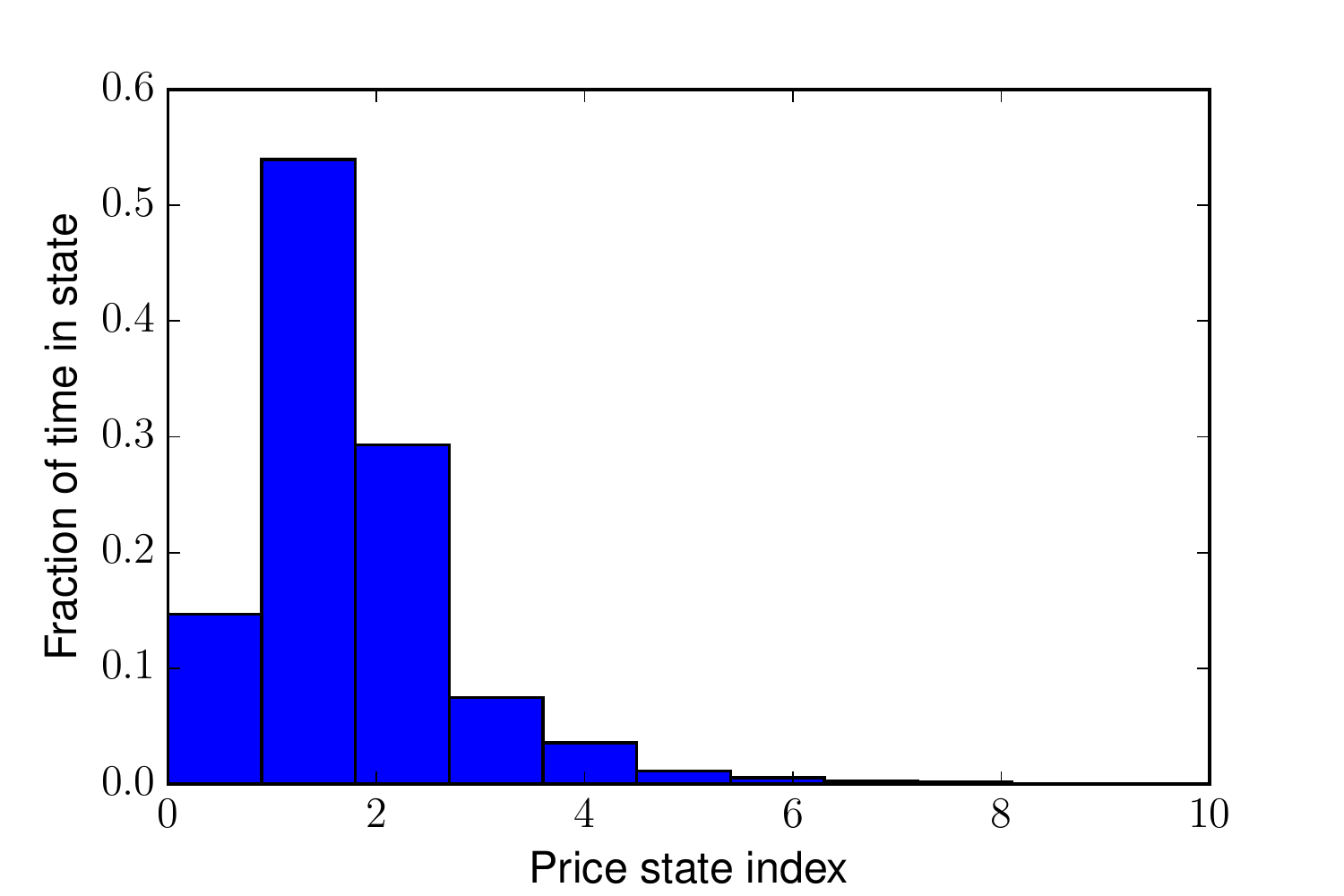}\,\,
	\includegraphics[width=0.45\linewidth]{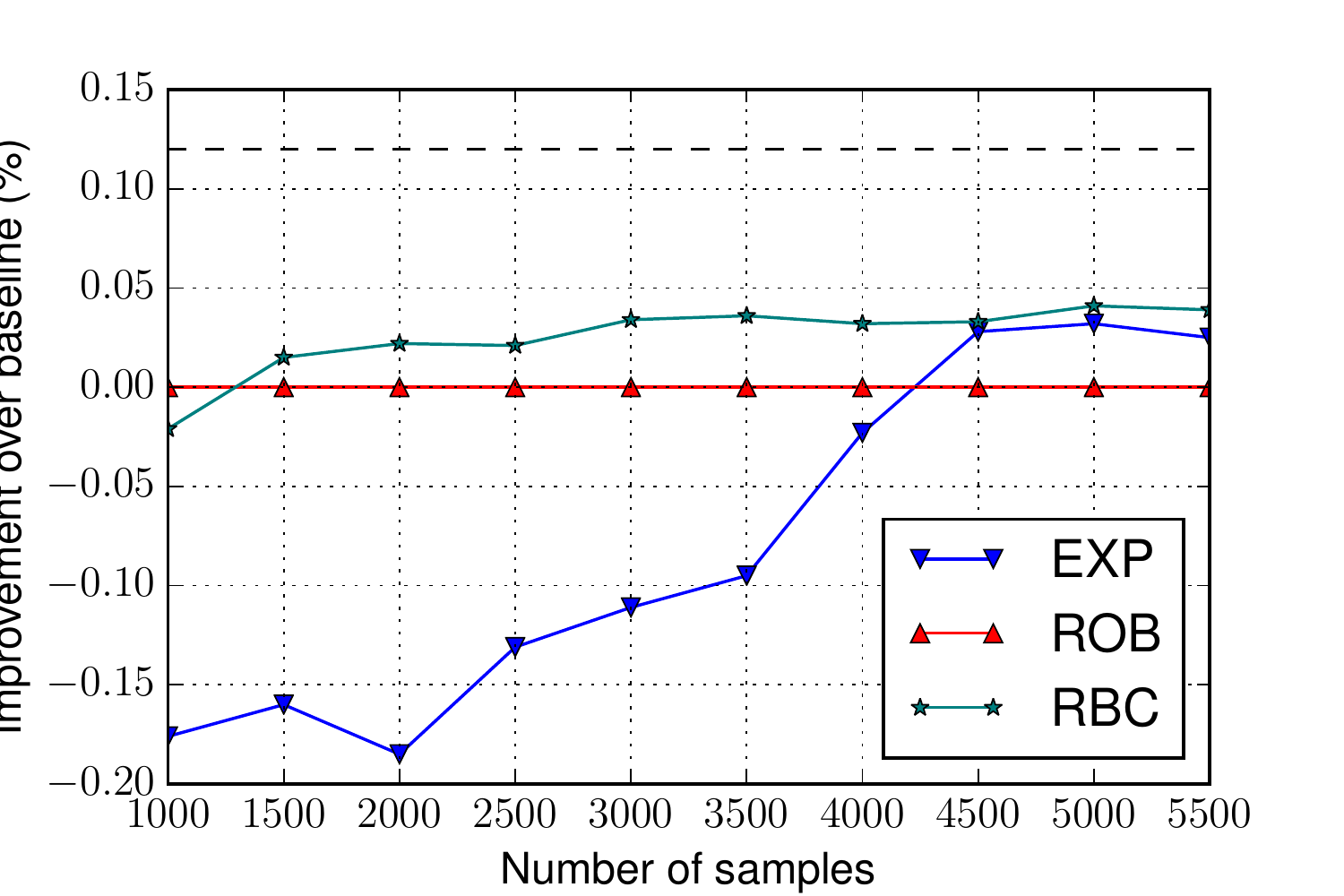}
	\vspace{-0.05in}
	\caption{\textbf{\textit{(left)}} Frequency of observed price indexes; each index corresponds to a discretized price level. \textbf{\textit{(right)}} Improvement over baseline policy as a function of the number of samples. }\label{fig:energy_results}
	\vspace{-0.15in}
\end{figure}

\section{Conclusion} 
\label{sec:conclusion}

In this paper, we studied the {\em model-based} approach to the fundamental problem of learning {\em safe} policies given a batch of data. A policy is considered {\em safe}, if it is guaranteed to have an improved performance over the baseline policy. Solving the problem of safety in sequential decision-making can immensely increase the applicability of the existing technology to real-world problems. We showed that the standard robust formulation may be overly conservative and formulated a better approach that interleaves an improved policy with the baseline policy, based on the error at each state. We proposed and analyzed an optimization problem based on this idea (see Eq.~\ref{eq:objective_robust_interleave}). We showed that the resultant problem may only have randomized solution policies, derived a performance bound for its solutions, and proved that solving it is NP-hard. Furthermore we proposed several approximate solutions and experimentally evaluated their performances. Since solving the optimization problem~\eqref{eq:objective_robust_interleave} is NP-hard, future work includes {\bf 1)} deriving approximate solution algorithms with tighter performance guarantees, and {\bf 2)} identifying specific structures in the the uncertainty set of transition probabilities that lead to a tractable solution algorithm.

\bibliographystyle{named}
\bibliography{library,LTV}

\newpage
\appendix
\section{Error Bound} 
\label{subsec:sampling_bounds}

Our goal here is to construct the error function $e$, when $\widehat{P}$ is estimated from the samples drawn from $P\opt$, such that we can guarantee that $P\opt \in \Nat(\widehat{P},e)$, with probability at least $1-\delta$. Let us assume that at each state-action pair $(x,a)\in\X\times\A$, we draw $N(x,a)$ samples from $P\opt(\cdot|x,a)$.

\begin{proposition}
If at each state-action pair $(x,a)\in\X\times\A$, we define $e(x,a)=\sqrt{\frac{2}{N(x,a)}\log\big(\frac{|\X||\A|2^{|\X|}}{\delta}\big)}$, then $P\opt \in \Nat(\widehat{P},e)$, with probability at least $1-\delta$.
\end{proposition}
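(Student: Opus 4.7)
The plan is to derive the bound by combining a standard concentration inequality for the $\ell_1$ deviation of an empirical distribution from its true distribution, applied pointwise at each state-action pair, with a union bound over the $|\X|\cdot|\A|$ pairs. Concretely, the key tool is Weissman's inequality (or an equivalent Hoeffding-type argument that exploits the identity $\|p-q\|_1 = 2\max_{S\subseteq\X}|p(S)-q(S)|$): for any fixed distribution $p$ over a finite set of size $m$, its empirical estimate $\widehat{p}$ from $N$ i.i.d.\ samples satisfies
\begin{equation*}
\Pr\bigl(\|\widehat{p}-p\|_1 \ge \epsilon\bigr) \;\le\; 2^{m}\exp\!\Bigl(-\tfrac{N\,\epsilon^{2}}{2}\Bigr).
\end{equation*}

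First, I would instantiate this bound at each fixed pair $(x,a)$, taking $m=|\X|$ (the support size of $P^\star(\cdot|x,a)$), $N=N(x,a)$, and $\epsilon = e(x,a)$. Substituting the definition of $e(x,a)$ from the proposition into the exponential, the right-hand side collapses to $\delta/(|\X||\A|)$. Thus for each fixed $(x,a)$,
\begin{equation*}
\Pr\Bigl(\|P^\star(\cdot|x,a)-\widehat{P}(\cdot|x,a)\|_1 \ge e(x,a)\Bigr) \;\le\; \frac{\delta}{|\X||\A|}.
\end{equation*}

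Second, I would apply a union bound over the $|\X|\cdot|\A|$ state-action pairs, which yields
\begin{equation*}
\Pr\Bigl(\exists (x,a)\colon \|P^\star(\cdot|x,a)-\widehat{P}(\cdot|x,a)\|_1 \ge e(x,a)\Bigr) \;\le\; \delta.
\end{equation*}
Taking the complementary event, with probability at least $1-\delta$ the inequality in Assumption~\ref{asm:error} holds simultaneously at every $(x,a)$, which is precisely the statement $P^\star \in \Xi(\widehat{P},e)$.

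The only non-routine step is invoking the $\ell_1$ concentration inequality, since a naive coordinate-wise Hoeffding bound combined with a union over the $|\X|$ coordinates would give a worse dependence; the factor $2^{|\X|}$ in the definition of $e$ is there precisely because Weissman's bound, or equivalently the reduction to the supremum over subsets $S\subseteq\X$ of $|\widehat{p}(S)-p(S)|$ followed by a union bound over the $2^{|\X|}$ subsets, is the source of that term. Once that inequality is in hand, the remainder of the argument is a direct calculation and union bound, so I do not expect further difficulty.
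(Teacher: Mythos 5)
Your proposal is correct and follows essentially the same route as the paper's proof: apply Weissman's $\ell_1$ concentration bound at each state-action pair, substitute the given $e(x,a)$ so the failure probability becomes $\delta/(|\X||\A|)$, and union-bound over all pairs. (The paper uses the slightly sharper constant $2^{|\X|}-2$ from Weissman's Theorem~2.1 and then relaxes it to $2^{|\X|}$, which matches your version.)
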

\begin{proof}
From Theorem~2.1~in~\citet{weissman2003inequalities}, for each state-action pair $(x,a)\in\X\times\A$, we may write 
\begin{equation}
\label{eq:W1}
\mathbb{P}\left(||P\opt(\cdot\mid x,a) - \widehat{P}(\cdot\mid x,a)||_1\geq \epsilon\right) \leq (2^{|\X|}-2)\exp{-\frac{N(x,a)\epsilon^2}{2}}.
\end{equation}
Setting $\epsilon=\sqrt{\frac{2}{N(x,a)}\log\big(\frac{|\X||\A|2^{|\X|}}{\delta}\big)}$, we may rewrite~\eqref{eq:W1} as
\begin{align}
\label{eq:W2}
\mathbb{P}\Big(||P\opt(\cdot\mid x,a) - \widehat{P}(\cdot\mid x,a)||_1&\geq\sqrt{\frac{2}{N(x,a)}\log\big(\frac{|\X||\A|2^{|\X|}}{\delta}\big)}\Big) \nonumber \\
&\leq 2^{|\X|}\exp{-\frac{N(x,a)}{2}\times\frac{2}{N(x,a)}\log\big(\frac{|\X||\A|2^{|\X|}}{\delta}\big)} \nonumber \\ 
&= \frac{\delta}{|\X||\A|}\;.
\end{align}
From the definition of the uncertainty set $\Nat(\widehat{P},e)$ and by summing the error probability in~\eqref{eq:W2}, we obtain that $\mathbb{P}\big(P\opt\notin\Nat(\widehat{P},e)\big)\leq\delta$. 
\end{proof}

\newpage
\section{Proof of Lemma~\ref{cor:policy_tran_error}}
\label{sec:tech-lemma}

for which the following technical lemma (whose proof can be found in  Appendix~\ref{sec:tech-lemma}) is used in the analysis.

Before proving Lemma~\ref{cor:policy_tran_error}, we first prove the following lemma.

\begin{lemma} 
\label{lem:bound_transitions}
For any policy $\pi\in\Pi_R$, consider two transition probability matrices $P_1$ and $P_2$ and two reward functions $r_1$ and $r_2$ corresponding to $\pi$. Let $\val_1$ and $\val_2$ be the value functions of the policy $\pi$ given $(P_1,r_1)$ and $(P_2,r_2)$, respectively. Under the assumption that for any state $x\in\X,\;\| P_1(\cdot | x) - P_2(\cdot | x) \|_1 \le e(x)$, we have
\begin{equation*}
(\eye - \disc P_1)^{-1} \Bigl( r_1 - r_2 - \frac{\gamma \rmax}{1-\gamma} e \Bigr) \le \val_1 - \val_2 \le (\eye - \disc P_1)^{-1} \Bigl( r_1 - r_2 + \frac{\gamma \rmax}{1-\gamma} e  \Bigr),
\end{equation*}
where $e$ is the vector of $e(x)$'s. 
\end{lemma}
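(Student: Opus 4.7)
The plan is to derive the difference $v_1 - v_2$ from the Bellman equations in a form involving $(I-\gamma P_1)^{-1}$, then bound the ``perturbation'' term componentwise using Assumption-like $\ell_1$ control together with the trivial $\ell_\infty$ bound on value functions, and finally invoke monotonicity of $(I-\gamma P_1)^{-1}$ to push the componentwise inequalities through.

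First I would write the Bellman equations for the fixed policy $\pi$ under the two models: $v_1 = r_1 + \gamma P_1 v_1$ and $v_2 = r_2 + \gamma P_2 v_2$. Subtracting and inserting $\pm \gamma P_1 v_2$ yields the standard identity
\begin{equation*}
(I - \gamma P_1)(v_1 - v_2) = (r_1 - r_2) + \gamma (P_1 - P_2)\, v_2,
\end{equation*}
so that $v_1 - v_2 = (I-\gamma P_1)^{-1}\bigl[(r_1-r_2) + \gamma(P_1-P_2)v_2\bigr]$.

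Next I would bound the perturbation $\gamma(P_1-P_2)v_2$ componentwise. For each state $x$, Hölder's inequality gives $|\gamma\sum_y (P_1(y|x)-P_2(y|x))\,v_2(y)| \le \gamma\,\|P_1(\cdot|x)-P_2(\cdot|x)\|_1\cdot\|v_2\|_\infty$, and the hypothesis together with the standard bound $\|v_2\|_\infty \le R_{\max}/(1-\gamma)$ (from $|r_2|\le R_{\max}$) yields $|\gamma(P_1-P_2)v_2|(x) \le \frac{\gamma R_{\max}}{1-\gamma}\, e(x)$ componentwise. Hence, as vectors,
\begin{equation*}
(r_1-r_2) - \tfrac{\gamma R_{\max}}{1-\gamma}\, e \ \le\ (r_1-r_2) + \gamma(P_1-P_2)v_2 \ \le\ (r_1-r_2) + \tfrac{\gamma R_{\max}}{1-\gamma}\, e.
\end{equation*}

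The key step, and the only one that needs a short justification, is that $(I-\gamma P_1)^{-1}$ is a componentwise nonnegative matrix, so it preserves elementwise inequalities. This follows from the Neumann series $(I-\gamma P_1)^{-1} = \sum_{t\ge 0}(\gamma P_1)^t$, whose terms are all nonnegative since $\gamma\in(0,1]$ and $P_1$ is a stochastic matrix (and the series converges because $\|\gamma P_1\|_\infty = \gamma < 1$ when $\gamma<1$; the $\gamma=1$ case can be handled in the standard way assuming absorbing termination, which is implicit in the discounted-return setup). Left-multiplying the displayed chain of inequalities by this nonnegative operator gives exactly the two-sided bound claimed in the lemma, completing the proof. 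I do not anticipate a real obstacle here: the argument is essentially the classical simulation lemma, repackaged to keep the $(I-\gamma P_1)^{-1}$ factor explicit rather than collapsing it into an $\ell_\infty$ bound.
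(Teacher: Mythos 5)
Your proposal is correct and follows essentially the same route as the paper's proof: the same $\pm\gamma P_1 v_2$ insertion leading to $v_1 - v_2 = (\eye-\gamma P_1)^{-1}\bigl[(r_1-r_2)+\gamma(P_1-P_2)v_2\bigr]$, the same H\"older bound $|\gamma(P_1-P_2)v_2|(x)\le\frac{\gamma\rmax}{1-\gamma}e(x)$, and the same appeal to the monotonicity of $(\eye-\gamma P_1)^{-1}$. Your explicit Neumann-series justification of that monotonicity is a welcome addition the paper leaves implicit.
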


\begin{proof}
The difference between the two value functions can be written as 
\begin{align*}
\val_{1} - \val_{2} &= r_1 + \gamma P_1 \val_{1} - r_2 - \gamma P_2\val_{2} \\
&= r_1 + \gamma P_1 \val_{1} - r_2 - \gamma P_2 \val_{2} + \gamma P_1\val_{2} - \gamma P_1\val_{2} \\
&= (r_1 - r_2) + \gamma P_1(\val_{1} - \val_{2}) + \gamma (P_1 - P_2) \val_{2} \\
&= (\eye - \gamma P_1)^{-1} \left[r_1 - r_2 + \gamma (P_1 - P_2)\val_{2}\right].
\end{align*}
Now using the Holder's inequality, for any $x\in\X$, we have
\begin{equation*}
\left| \big(P_1(\cdot|x) - P_2(\cdot|x)\big)\tr \val_{2} \right| \le \left\| P_1(\cdot|x) - P_2(\cdot|x) \|_1 \right\| 
\val_{2} \|_\infty \le  e(x) \| \val_{2} \|_\infty \le  e(x) \frac{R_{\max}}{1-\gamma}.
\end{equation*}
The proof follows by uniformly bounding $(P_1 - P_2)\val_{2}$ from the above inequality and from the monotonicity of $(\eye - \gamma P_1)^{-1}$.
\end{proof}

\begin{lemma} 
	\label{cor:policy_tran_error}
	The difference between the returns of a policy $\pol$ in two MDPs parameterized by $P^\star,\nat\in\Nat$ is bounded as
	\begin{equation*}
	|\return(\pol,P^\star) - \return(\pol,\nat)| \le \frac{2\gamma\rmax}{1-\gamma} \, p_0\tr (\eye - \disc P^\star_\pol)^{-1} e_\pol,
	\end{equation*}
	where $P^\star_\pol$ and $e_\pol$ are the transition probability matrix and error function (between $P^\star$ and $\xi$, see Eq.~\ref{eq:ass1}) of policy $\pol$. 
\end{lemma}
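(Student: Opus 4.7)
My plan is to reduce Lemma~\ref{cor:policy_tran_error} to Lemma~\ref{lem:bound_transitions} by taking $r_1=r_2$ and then bounding the resulting expression from above and below in the same way. Writing $P_1=P^\star_\pol$, $P_2=\nat_\pol$, and $r_1=r_2=r_\pol$ (since the true reward is assumed known), the hypothesis of Lemma~\ref{lem:bound_transitions} requires a per-state $\ell_1$ bound on $P^\star_\pol(\cdot|x)-\nat_\pol(\cdot|x)$. This is where the factor of $2$ appears: both $P^\star$ and $\nat$ lie in $\Nat(\widehat{P},e)$ by Assumption~\ref{asm:error}, so by the triangle inequality
\[
\|P^\star_\pol(\cdot|x)-\nat_\pol(\cdot|x)\|_1 \le \|P^\star_\pol(\cdot|x)-\widehat{P}_\pol(\cdot|x)\|_1 + \|\widehat{P}_\pol(\cdot|x)-\nat_\pol(\cdot|x)\|_1 \le 2\, e_\pol(x),
\]
for every $x\in\X$, where $e_\pol(x)=\sum_a \pol(a|x)\,e(x,a)$ is the policy-induced error vector.

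Next, I would plug this $2 e_\pol$ into Lemma~\ref{lem:bound_transitions} (with the reward difference cancelling) to obtain the two-sided bound
\[
-\frac{\gamma\rmax}{1-\gamma}\,(\eye-\disc P^\star_\pol)^{-1}(2 e_\pol) \;\le\; \val_\pol^{P^\star} - \val_\pol^{\nat} \;\le\; \frac{\gamma\rmax}{1-\gamma}\,(\eye-\disc P^\star_\pol)^{-1}(2 e_\pol),
\]
i.e.\ $|\val_\pol^{P^\star} - \val_\pol^{\nat}| \le \frac{2\gamma\rmax}{1-\gamma}(\eye-\disc P^\star_\pol)^{-1} e_\pol$ componentwise.

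Finally, I would left-multiply by $p_0\tr$. Because $p_0\ge 0$ and the resolvent $(\eye-\disc P^\star_\pol)^{-1}=\sum_{t\ge 0}\disc^t(P^\star_\pol)^t$ is an entrywise nonnegative matrix, the inequality is preserved entry by entry, and using $\return(\pol,P)=p_0\tr \val_\pol^{P}$ gives
\[
|\return(\pol,P^\star)-\return(\pol,\nat)| \le \frac{2\gamma\rmax}{1-\gamma}\, p_0\tr (\eye-\disc P^\star_\pol)^{-1} e_\pol,
\]
which is exactly the claim.

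There is essentially no hard step here; the only subtle point is the factor of $2$, which is not an artifact of the proof but a genuine consequence of measuring both $P^\star$ and $\nat$ against the \emph{nominal} model $\widehat{P}$ in Assumption~\ref{asm:error} rather than against each other. The rest is just bookkeeping: invoking Lemma~\ref{lem:bound_transitions}, symmetrizing the two-sided bound into an absolute value, and using nonnegativity of $p_0$ and the geometric series expansion of the resolvent to push $p_0\tr$ through the inequality.
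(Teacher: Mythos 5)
Your proposal is correct and follows essentially the same route as the paper: both invoke Lemma~\ref{lem:bound_transitions} with $r_1=r_2$ and obtain the factor of $2$ from the triangle inequality $\|P^\star(\cdot|x,a)-\nat(\cdot|x,a)\|_1 \le \|P^\star(\cdot|x,a)-\widehat{P}(\cdot|x,a)\|_1 + \|\widehat{P}(\cdot|x,a)-\nat(\cdot|x,a)\|_1 \le 2e(x,a)$, since both $P^\star$ and $\nat$ lie in $\Nat(\widehat{P},e)$. Your write-up is in fact more explicit than the paper's (which states the reduction in one line), spelling out the convexity step for the policy-induced error vector and the nonnegativity argument for pushing $p_0\tr$ through the componentwise bound.
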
 
\begin{proof}
Lemma~\ref{cor:policy_tran_error} is the direct consequence of Lemma~\ref{lem:bound_transitions} with the fact that for any $(x,a)\in\X\times\A$ and any $\xi\in\Xi(\widehat{P},e)$, from Assumption~\ref{asm:error} and the construction of $\Nat(\widehat{P},e)$, we have 
\begin{align*}
\| P\opt(\cdot|x,a) - \xi(\cdot|x,a)\|_1 &= \| P\opt(\cdot|x,a) - \widehat{P}(\cdot|x,a) + \widehat{P}(\cdot|x,a) - \xi(\cdot|x,a) \|_1 \nonumber \\
&\le \| P\opt(\cdot|x,a) - \widehat{P}(\cdot|x,a) \|_1 + \| \widehat{P}(\cdot|x,a) - \xi(\cdot|x,a) \|_1  \nonumber \\
&\le 2 e(x,a)\;.
\end{align*}
\end{proof}


\newpage
\section{Proof of Theorem~\ref{thm:perf_robust_interleave}}
\label{sec:proof-bound}

To prove the safety of $\polrobi$, note that the objective in~\eqref{eq:objective_robust_interleave} is always non-negative, since the baseline policy $\polbase$ is a feasible solution. Thus, we obtain the safety condition by simple algebraic manipulation as follows:
\begin{equation}
\label{eq:performance_loss_robi}
\return(\polrobi,P^\star) - \return(\polbase,P^\star) \ge \min_{\nat\in\Nat} \Big( \return(\polrobi, \nat) - \return(\polbase, \nat) \Big) = \max_{\pi\in\Pi_R}\min_{\nat\in\Nat} \Big( \return(\pi, \nat) - \return(\polbase, \nat) \Big) \ge 0 ~. 
\end{equation}
Now we prove the performance bound. 
From \cref{cor:policy_tran_error}, for any policy $\pol$, we may write
\begin{equation} 
\label{eq:robi_proof_upper}
\max_\xi \Bigl|  \rho(\pol, \nat) - \rho(\pol,P^\star) \Bigr|  \le \frac{2\gamma \rmax}{1-\gamma} \, p_0\tr (\eye - P_\pol\opt )^{-1} \, e_\pol = \frac{2\gamma \rmax}{(1-\gamma)^2} \, \| e_\pol \|_{1,u^\star_\pi} ~,    
\end{equation}
where $u^\star_\pol$ is state occupancy distribution of policy $\pol$ in the true MDP $P^\star$, defined as
\begin{equation*}
u^\star_\pi = (1-\gamma) (\eye - \gamma P^{\star\top}_\pol)^{-1} p_0.
\end{equation*}
We are now ready to show a bound on the performance loss of $\polrobi$ through the following set of inequalities: 
\begin{align}
\label{eq:proof5-1}
\Phi(\pi_S) &= \rho(\pi^\star,P^\star) - \rho(\pi_S,P^\star) = \rho(\pi^\star,P^\star) - \rho(\pi_S,P^\star) + \rho(\pi_B,P^\star) - \rho(\pi_B,P^\star) \nonumber \\ 
&\le \rho(\pi^\star,P^\star) - \rho(\pi_B,P^\star) - \min_\xi\Big(\rho(\pi_S,\xi) - \rho(\pi_B,\xi)\Big) \nonumber \\ 
&\le \rho(\pi^\star,P^\star) - \rho(\pi_B,P^\star) - \min_\xi\Big(\rho(\pi^\star,\xi) - \rho(\pi_B,\xi)\Big) \nonumber \\
&\le \rho(\pi^\star,P^\star) - \rho(\pi_B,P^\star) - \min_\xi\rho(\pi^\star,\xi) + \max_\xi\rho(\pi_B,\xi) \nonumber \\
&= \max_\xi\Big(\rho(\pi^\star,P^\star) - \rho(\pi^\star,\xi)\Big) + \max_\xi\Big(\rho(\pi_B,\xi) - \rho(\pi_B,P^\star)\Big) \nonumber \\
&\stackrel{\text{(a)}}{\leq} \frac{2\gamma \rmax}{(1-\gamma)^2}\Big(\| e_{\pi^\star} \|_{1,u^\star_{\pi^\star}} + \| e_{\pi_B} \|_{1,u^\star_{\pi_B}}\Big)~,
\end{align}
where {\bf (a)} is by applying \eqref{eq:robi_proof_upper} to the two $\max$ terms on the RHS of the inequality. 

The final bound is obtained by combining \eqref{eq:proof5-1} and the fact that $\rho(\pi_S,P^\star)\geq\rho(\pi_B,P^\star)$, and as a result, $\Phi(\pi_S)\leq\Phi(\pi_B)$.

	
To prove the tightness of the bound, we use the example depicted in \cref{fig:example_tight_bound}. The initial state is $x_0$, actions are $a_1,a_2$, the transitions are deterministic, and the leafs represent absorbing states with the given return. We denote by $P^\star$, the transitions of the true MDP, and by $\nat_1$, the worst-case transitions in the uncertainty set $\Nat(\widehat{P},e)$. Finally the baseline policy $\pi_B$ takes action $a_1$ in state $x_0$ and shown by double edges in Figure~\ref{fig:example_tight_bound}. It is clear that the optimal policy $\pi^\star$ is the one that takes action $a_2$ in state $x_0$. The return of this policy is $\rho(\pi^\star,P^\star)=1+2\epsilon$. It is also straightforward to derive that the policy $\pi_S$ that takes action $a_1$ in state $x_0$ (as shown in Figure~\ref{fig:example_tight_bound}) is a solution to~\eqref{eq:objective_robust_interleave}. The return of this policy is $\rho(\pi_S,P^\star)=1$ and its performance loss is $\Phi(\pi_S)=\rho(\pi^\star,P^\star)-\rho(\pi_S,P^\star)=2\epsilon$. 

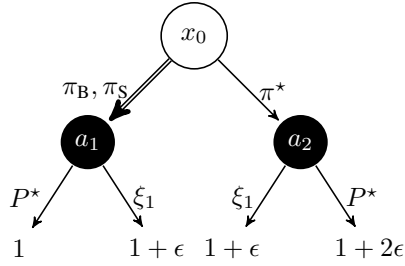
\begin{figure}[h]
\centering
\begin{tikzpicture}[->,>=stealth',shorten >=1pt,auto,node distance=2cm,semithick]
\tikzstyle{action}= [circle,fill=black,text=white]
	
\node[state]        (s1)            	                         {$x_0$};
\node[action]      (a1)         [below left of=s1]     	{$a_1$};
\node[action]      (a2)         [below right of=s1]    {$a_2$};
	
\node (o11) [below left of=a1,xshift=5mm] {$1$};
\node (o12) [below right of=a1,xshift=-5mm] {$1+\epsilon$};
\node (o21) [below left of=a2,xshift=5mm] {$1+\epsilon$};
\node (o22) [below right of=a2,xshift=-5mm] {$1+2\epsilon$};
	
\path (s1) edge [double] node [left]{$\pol_{\text{B}},\pol_{\text{S}}$} (a1) ;
\path (a1) edge node [left]{$P^\star$} (o11);
\path (a1) edge node [right]{$\nat_1$} (o12);
	
\path (s1) edge node [right]{$\pol\opt$} (a2) ;
\path (a2) edge node [left]{$\nat_1$} (o21);
\path (a2) edge node [right]{$P^\star$} (o22);
	
\end{tikzpicture}
\caption{Example showing the tightness of the bound in Theorem~\ref{thm:perf_robust_interleave}.} \label{fig:example_tight_bound}
\end{figure}

Now let us set $\epsilon$ in the leafs of Figure~\ref{fig:example_tight_bound} to $\epsilon=\frac{2\gamma \rmax}{(1-\gamma)^2} \, \| e_{\pi^\star} \|_{1,u^\star_{\pi^\star}}$. Note that this is the value given by~\eqref{eq:robi_proof_upper} for $\pi=\pi^\star$. This gives us the tightness proof assuming that $\widehat{P}$ is such that $\| e_{\pi^\star} \|_{1,u^\star_{\pi^\star}}$ and $\| e_{\pi_B} \|_{1,u^\star_{\pi_B}}$ have similar values, and $1+2\epsilon$ is a valid return value, i.e.,~$1+2\epsilon\leq\frac{\rmax}{1-\gamma}$. 


\newpage
\section{Proof of Theorem~\ref{thm:complexity_joint}}
\label{sec:proof-complexity}

\begin{figure}
    \centering
    \begin{tikzpicture}[->,>=stealth',shorten >=1pt,auto,node distance=1.5cm,semithick]
    \tikzstyle{action}= [circle,fill=black,text=white]
    
    \node[state,initial] (l11t)                                                          {$l_{11}^T$};
    \node[state]           (l11f)        [right= 0.6cm of l11t]           {$l_{11}^F$};        
    \node[state]             (l12t)        [below of=l11t]                    {$l_{12}^T$};
    \node[state]           (l12f)        [below of= l11f]                   {$l_{12}^F$};        
    \node[state]             (l13t)        [below of=l12t]                    {$l_{13}^T$};
    \node[state]           (l13f)        [below of= l12f]                   {$l_{13}^F$};        
    
    \node[state]            (l21t)         [right= 1.3cm of l11f]              {$l_{21}^T$};
    \node[state]           (l21f)        [right= 0.6cm of l21t]            {$l_{21}^F$};        
    \node[state]            (l22t)         [below of=l21t]                        {$l_{22}^T$};
    \node[state]           (l22f)        [below of= l21f]                       {$l_{22}^F$};        
    \node[state]           (l23t)         [below of=l22t]                        {$l_{23}^T$};
    \node[state]           (l23f)        [below of= l22f]                       {$l_{23}^F$};            
    
    \node[accepting,state] (end) [right=1.3cm of l21f] {$-1$};
    
    \begin{pgfonlayer}{background}
    \filldraw [line width=1mm,black!2]
    (l11t.north -| l11t.west) rectangle (l11f.south -| l11f.east)
    (l12t.north -| l12t.west) rectangle (l12f.south -| l12f.east)
    (l13t.north -| l13t.west) rectangle (l13f.south -| l13f.east)
    
    (l21t.north -| l21t.west) rectangle (l21f.south -| l21f.east)
    (l22t.north -| l22t.west) rectangle (l22f.south -| l22f.east)
    (l23t.north -| l23t.west) rectangle (l23f.south -| l23f.east);
    \end{pgfonlayer}
    
    \path (l11t) edge node{1} (l11f) 
    (l11f) edge node{0} (l21t) 
    (l11t) edge node{0} (l12t)
    (l11f) edge node[above]{1} (l12t);
    \path (l12t) edge node{1} (l12f) 
    (l12f) edge node{0} (l21t) 
    (l12t) edge node{0} (l13t)
    (l12f) edge node[above]{1} (l13t);
    \path (l13t) edge node{0} (l13f) 
    (l13f) edge[below] node{1} (l21t) 
    (l13t) edge[loop below] node[above]{1} (l13t)
    (l13f) edge[loop below] node[above]{0} (l13f);
    \path (l21t) edge node{0} (l21f) 
    (l21f) edge node{1} (end) 
    (l21t) edge node{1} (l22t)
    (l21f) edge node[above]{0} (l22t);
    \path (l22t) edge node{1} (l22f) 
    (l22f) edge node{0} (end) 
    (l22t) edge node{0} (l23t)
    (l22f) edge node[above]{1} (l23t);
    \path (l23t) edge node{1} (l23f) 
    (l23f) edge node[below]{0} (end) 
    (l23t) edge[loop below] node[above]{0} (l23t)
    (l23f) edge[loop below] node[above]{1} (l23f);
    \end{tikzpicture}
    \caption{MDP $\mathcal{M}_1$ in \cref{thm:complexity_joint} that represents the optimization of $\return(\pol, \xi)$ over $\xi$.} \label{fig:nphard_mdp1}
\end{figure}

\begin{figure}
    \centering
    \begin{tikzpicture}[->,>=stealth',shorten >=1pt,auto,node distance=1.5cm,semithick]
    \tikzstyle{action}= [circle,fill=black,text=white]
    
    \node[state,initial]      (avar)                        {$a$};
    \node[state]       (l11t)      [right of=avar]                {$l_{11}^T$};
    \node[state]       (l11f)      [below of=l11t]               {$l_{11}^F$};

        \path (l11t) edge[loop above] node[below]{0} (l11t)
                 (l11f) edge[loop below] node[above]{0} (l11f);

    \node[state]       (l21t)       [right of=l11t]             {$l_{21}^T$};
    \node[state]       (l21f)       [right of=l11f]             {$l_{21}^F$};

    \path (l21t) edge[loop above] node[below]{0} (l21t)
             (l21f) edge[loop below] node[above]{0} (l21f);
    
    \node[state]      (bvar)    [right of=l21t]                {$b$};
    
    \node[state]      (l12t)       [right of=bvar]            {$l_{12}^T$};
    \node[state]      (l12f)       [below of=l12t]        {$l_{12}^F$};

    \path (l12t) edge[loop above] node[below]{0} (l12t)
             (l12f) edge[loop below] node[above]{0} (l12f);

    \node[state]      (l23t)       [right of=l12t]            {$l_{23}^T$};
    \node[state]      (l23f)       [below of=l23t]        {$l_{23}^F$};

        \path (l23t) edge[loop above] node[below]{0} (l23t)
                 (l23f) edge[loop below] node[above]{0} (l23f);
    
    \node[accepting,state]     (end)  [right of=l23t]             {$+1$};

    \path (avar) edge node {T} (l11t) 
            (l11t) edge node {$1$} (l21t)
            (l21t) edge node {$1$} (bvar);
    \path (avar) edge node [below] {F} (l11f) 
            (l11f) edge node {$1$} (l21f)
            (l21f) edge node {$1$} (bvar);

    \path (bvar) edge node {T} (l12t) 
            (l12t) edge node {$1$} (l23t)
            (l23t) edge node {$1$} (end);
    \path (bvar) edge node [below] {F} (l12f) 
            (l12f) edge node {$1$} (l23f)
            (l23f) edge node {$1$} (end);
    \end{tikzpicture}
    \caption{MDP $\mathcal{M}_2$ in \cref{thm:complexity_joint} representing the optimization of $\return(\polbase, \xi)$ over $\xi$.} \label{fig:nphard_mdp2}
\end{figure}

Assume a given fixed policy $\pol$. We start by showing the NP hardness of solving~\eqref{eq:obj_interleave_inner}:
\begin{equation*}
\min_{\xi\in\Nat}\big(\return(\pol, \xi) - \return(\polbase, \xi)\big)
\end{equation*}
by a reduction from the boolean satisfiability (SAT) problem. To simplify the exposition, we also illustrate the reduction on the following simple example SAT problem in a conjunctive normal form (CNF):
\begin{equation} \label{eq:sat_example}
(a \vee  b \vee \neg\, c) \wedge (\neg\, a \vee d \vee b)  = (l_{11} \vee l_{12} \vee l_{13}) \wedge (l_{21} \vee l_{22} \vee l_{23})~,
\end{equation}
where $a$, $b$, $c$, and $d$ are the variables, and $l_{ij}$ represent the $j$-th literal in $i$-th disjunction. 

As noted above, $\return(\pol, \xi)$ represents the return of a robust MDP. Recall that computing $\min_\xi \return(\pol, \xi)$ for a fixed $\pol$ is equivalent to computing a policy in a regular MDP with actions representing realizations of the transition uncertainty. Therefore, optimizing for $\xi$ in \eqref{eq:obj_interleave_inner} translates to finding a single policy $\xi$ for two MDPs---defined by $\pol$ and $\polbase$---that maximizes the difference between their returns $\return(\pol, \xi) - \return(\polbase, \xi)$. 

We reduce the SAT problem to the optimization over $\xi$ in \eqref{eq:obj_interleave_inner}. As described above, the value $\return(\pol, \xi)$ for a fixed $\pol$ can be represented as a return of some MDP $\mathcal{M}_1$ for a policy given by $\xi$. Similarly, the value $\return(\polbase, \xi)$ for a fixed $\polbase$ can be represented as a return of another MDP $\mathcal{M}2$. We describe the general reduction in detail below. Figures~\ref{fig:nphard_mdp1} and~\ref{fig:nphard_mdp2} illustrate the MDPs $\mathcal{M}_1$ and $\mathcal{M}_2$ respectively for the example in \eqref{eq:sat_example}.

MDPs $\mathcal{M}_1,\mathcal{M}_2$ share the same state and action sets. The actions represent the realization of uncertainty $\xi$ and are denoted by the edge labels. They are discrete and stand for the extreme points of feasible $\ell_1$ uncertainty sets. For ease of notation, we assume $\gamma = 1$ and states with double circles are terminal with rewards inscribed therein. All non-terminal transition have zero rewards.

The identical state set of both $\mathcal{M}_1$ and $\mathcal{M}_2$ are constructed as follows. There is one state for each variable $\textup{v}\in\{a,b,c,d\}$, and two states $\{l_{ij}^T,l_{ij}^F\}$ for every literal $l_{ij}$. Informally, actions $\{T,F\}$ for a variable state capture the value of that variable. Actions $\{0,1\}$ for a literal state $l_{ij}^T$ or $l_{ij}^F$ represent the value of the variable referenced by the literal. This is regardless of whether the literal is positive or negative. For example, when the variable in $l_{ij}$ is true, the action in $l_{ij}^T$ is $1$ and when the variable in $l_{ij}$ is false, the action in $l_{ij}^F$ is $1$. Two states per each literal are necessary in order to model the negation operation.

The transitions in MDPs $\mathcal{M}_1$ and $\mathcal{M}_2$ are constructed to guarantee that their returns are $-1$ and $+1$, respectively (and as a result the objective in \eqref{eq:obj_interleave_inner} is $-2$), only if the assignment to the literals satisfies the SAT problem. Note that the transitions for the negated literals, such as $l_{21}$ are different from the positive literals, such as $l_{11}$. This construction easily generalizes to any SAT problem in the CNF. Consider the example in~\eqref{eq:sat_example} and let $b=T$ (other variables can take any values). It can then be seen readily that the objective in~\eqref{eq:obj_interleave_inner} would be $-2$.

Let $\rho\opt$ be the optimal value of~\eqref{eq:obj_interleave_inner}. Then, to show the correctness of our reduction, we argue that $\rho\opt = -2$, if and only if the SAT problem is satisfiable. To show the reverse implication, assume that the SAT is satisfied for some assignment to variables and construct a policy $\bar{\xi}$ as follows:
\begin{equation*} 
\bar\xi(\textup{v}) = \begin{cases} T &\text{if } \textup{v} = \true \\ F &\text{otherwise} \end{cases} ,\quad\quad
\bar\xi(l_{ij}^T) = \begin{cases} 1 &\text{if } \textup{v}_{ij} = \true \\ 0 &\text{otherwise} \end{cases} ,\quad\quad
\bar\xi(l_{ij}^F) = \begin{cases} 0 &\text{if } \textup{v}_{ij} = \true \\ 1 &\text{otherwise} \end{cases}~,
\end{equation*}
where $\textup{v}_{ij}$ represents the value of the variable referenced by the corresponding literal $l_{ij}$, e.g.,~$\textup{v}_{11} = \textup{v}_{21} = a$ in~\eqref{eq:sat_example}. It can be readily seen that $\return(\pol, \bar{\xi}) = 1$ and $\return(\polbase, \bar{\xi}) = -1$, and thus, the implication that $\rho^\star=-2$ holds.

To show the forward implication, assume that for an optimal deterministic realization $\bar\xi$, we have $\return(\polbase, \xi\opt) = 1$ and $\return(\polbase, \xi\opt) = -1$, and thus, $\return\opt = -2$. We assign values to variables $\textup{v}$ as follows:
\begin{equation*}
\textup{v} = \begin{cases}
\text{true} &\text{if } \bar\xi(\textup{v}) = T~, \\
\text{false} &\text{otherwise}~.
\end{cases}
\end{equation*}
We have that $\return(\polbase,\bar\xi) = 1$ only if for every disjunction $i$ either {\bf 1)} there exists a positive literal $l_{ij}$ such that $\bar\xi(l_{ij}^T) = 1$ and $\bar\xi(l_{ij}^F) = 0$, or {\bf 2)} there exists a negative literal $l_{ij}$ such that $\bar\xi(l_{ij}^T) = 0$ and $\bar\xi(l_{ij}^F) = 1$. Assume without loss of generality that this is always the first literal $l_{i1}$.  Now, consider any positive $l_{i1} = \textup{v}$ and observe that $\bar\xi(l_{i1}^T) = 1$ and $\bar\xi(l_{i1}^F) = 0$. Because $\return(\polbase,\bar\xi) = 1$ only if $\bar\xi(\textup{v}) = T$, the disjunction $i$ is satisfied. The case of a negative $l_{i1}$ is analogous, and thus, the forward implication also holds.

The restriction to deterministic policies $\bar\xi$ in the forward implication argument can be lifted by considering a discount factor; in such case the maximal return in $\mathcal{M}_2$ may be achieved only by a deterministic policy. Then, appropriately increasing the return in $\mathcal{M}_2$ finishes the argument.

The argument above shows that the inner minimization problem in \eqref{eq:objective_robust_interleave} is NP hard. Recall that \eqref{eq:objective_robust_interleave} is stated as follows:
\[ 
\max_{\pol \in\Polrand}  \min_{\xi\in\Nat}\big(\return(\pol, \xi) - \return(\polbase, \xi)\big) 
\]
To prove the theorem, it simply remains to show that the outer maximization over $\pol$ does not make the problem any easier. To show this, we will construct a single robust MDP $\mathcal{R}$ such that a policy with the maximal improvement induces $\mathcal{M}_1$ as the robust optimization subproblem. Baseline policy $\polbase$ in $\mathcal{R}$ similarly induces $\mathcal{M}_2$. Then, the difference between improved and baseline policies is no greater than some threshold if and only if the SAT is satisfiable.

Construct the robust MDP $\mathcal{R}$ with the same state set as $\mathcal{M}_1$ and $\mathcal{M}_2$. There are two actions $a_1$ and $a_2$ in each state. Upon taking action $a_1$, the transitions are chosen according to $\mathcal{M}_1$ and the reward is as in $\mathcal{M}_1$. Upon taking action $a_2$, the transition and reward are given the same as in $\mathcal{M}_2$ minus $3$. Rewards in terminal states are not modified.

The baseline policy takes action $a_2$, i.e. $\polbase(x) = a_2$. Return of the baseline policy is in $[3\, k, 3\,k + 1]$ where $k$ is the sum of the number of distinct variables and literals in the CNF. 

Let the improvement policy $\pol'$ be $\pol'(x) = a_1$. It can be readily seen that this policy achieves the maximal improvement. This is because $\return(\pol',\xi) \in [0,-1]$ while the return of any other policy will be at most $-3$ (the return for $a_2$ in any state is $-3$).

To finish the proof, observe that when the SAT is satisfiable then:
\[ \max_{\pol \in\Polrand} \min_{\xi\in\Nat}\big(\return(\pol, \xi) - \return(\polbase, \xi)\big)  = 
\min_{\xi\in\Nat}\big(\return(\pol', \xi) - \return(\polbase, \xi)\big) = 3\,k - 2 ~.\]
This is true using the above argument concerning the optimal value of the inner minimization problem. On the other hand, when the SAT is unsatisfiable then by the same argument:
\[ \max_{\pol \in\Polrand} \min_{\xi\in\Nat}\big(\return(\pol, \xi) - \return(\polbase, \xi)\big)  = 
 \min_{\xi\in\Nat}\big(\return(\pol', \xi) - \return(\polbase, \xi)\big) > 3\,k - 2 ~.\]
This shows that deciding whether the optimal value of \eqref{eq:objective_robust_interleave} is greater than $3\,k - 2$ is as hard as solving the corresponding SAT.

\newpage
\section{Proof of Theorem~\ref{thm:perf-simulator}}
\label{sec:proof-perf-simulator}

From Lemma~\ref{cor:policy_tran_error} with $\pi=\pi_{\text{sim}}$ and $\xi=\widehat{P}$ we have
\begin{equation*}
\return(\pi_{\text{sim}},\widehat{P}) - \frac{\gamma\rmax}{1-\gamma}p_0^\top(\eye-\gamma P^\star_{\pi_{\text{sim}}})^{-1}e_{\pi_{\text{sim}}} \leq \return(\pi_{\text{sim}},P^\star).
\end{equation*}
Thus, we may write
\begin{align*}
\Phi(\pi_{\text{sim}}) &\stackrel{\Delta}{=} \return(\pi^\star,P^\star) - \return(\pi_{\text{sim}},P^\star) \le \return(\pi^\star,P^\star) - \return(\pi_{\text{sim}},\widehat{P}) + \frac{\gamma \rmax}{1-\disc} \indist\tr (\eye - \disc P^\star_{\pi_{\text{sim}}})^{-1} e_{\pi_\text{sim}} \\
&\stackrel{\text{(a)}}{\le} \return(\pi^\star,P^\star) - \return(\pi^\star,\widehat{P}) + \frac{\gamma \rmax}{1-\disc} \indist\tr (\eye - \disc P^\star_{\pi_{\text{sim}}})^{-1} e_{\pi_{\text{sim}}} \\
&\stackrel{\text{(b)}}{\le} \frac{\gamma \rmax}{1-\disc} \indist\tr \left[(\eye - \disc P^\star_{\pi^\star})^{-1}e_{\pi^\star} + (\eye - \disc P^\star_{\pi_{\text{sim}}})^{-1}e_{\pi_{\text{sim}}}\right] \\
&\stackrel{\text{(c)}}{\le} \frac{2\gamma \rmax}{(1-\disc)^2}\| e \|_\infty~,
\end{align*}
where each step follows because:
\begin{itemize}
\item[(a)] Optimality of $\pi_{\text{sim}}$ in the MDP with transition probabilities $\widehat{P}$.
\item[(b)] Application of Lemma~\ref{cor:policy_tran_error} with policy $\pi=\pi^\star$ and $\xi=\widehat{P}$. 
\item[(c)] For any policy $\pi\in\Pi_R$, we have that $\| \indist\tr (\eye - \disc P^\star_\pi)^{-1} \|_1 = 1/(1-\gamma)$, and from the application of the Holder's inequality.
\end{itemize}
	

\newpage	
\section{Performance Bound on the Solution of the Robust Algorithm}
\label{sec:proof-perf-robust}

\begin{theorem}	
\label{thm:perf_robust}
Given Assumption~\ref{asm:error}, the nonempty solution $\pi_R$ of Algorithm~\ref{alg:robust} is safe, i.e.,~$\rho(\pi_R,P^\star) \ge \rho(\pi_B,P^\star)$. Moreover, its performance loss $\Phi(\pi_R)$ satisfies
\begin{equation*}
\Phi(\pi_R) \stackrel{\Delta}{=} \return(\pi^\star,P^\star) - \return(\pi_R,P^\star) 
\le \min\! \left\{\frac{2 \gamma \rmax}{(1-\disc)^2}  \left(\| e_{\pi^\star} \|_{1,u^\star_{\pi^\star}}\!+ \| e_{\pi_B} \|_{1,u^\star_{\pi_B}}\right),\Phi(\pi_B)\right\}, 
\end{equation*}
where $u^\star_{\pi^\star}$ is the state occupancy distribution of the optimal policy $\pi^\star$ in the true MDP $P^\star$, and $\Phi(\polbase)=\return(\pi^\star,P^\star) - \return(\polbase,P^\star)$ is the performance loss of the baseline policy.
\end{theorem}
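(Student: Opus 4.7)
The plan is to split on whether Algorithm~\ref{alg:robust} returns $\pi_0$ or $\pi_B$. Safety in the first case follows by chaining $\rho(\pi_0,P^\star) \ge \min_{\xi\in\Xi} \rho(\pi_0,\xi) > \max_{\xi\in\Xi} \rho(\pi_B,\xi) \ge \rho(\pi_B,P^\star)$, where the middle strict inequality is precisely the branching condition of the algorithm and the outer inequalities use $P^\star\in\Xi$. In the second case $\pi_R=\pi_B$, so safety is immediate. Given safety, the $\Phi(\pi_B)$ side of the $\min$ in the performance bound is automatic, since $\rho(\pi_R,P^\star)\ge\rho(\pi_B,P^\star)$ gives $\Phi(\pi_R)\le\Phi(\pi_B)$ directly.

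For the other side of the $\min$ I again split on the branch taken. When $\pi_R=\pi_0$, robust optimality $\min_\xi \rho(\pi_0,\xi)\ge\min_\xi \rho(\pi^\star,\xi)$ together with $P^\star\in\Xi$ yields $\rho(\pi_0,P^\star)\ge\min_\xi \rho(\pi^\star,\xi)$, and therefore $\Phi(\pi_0)\le \rho(\pi^\star,P^\star)-\min_\xi \rho(\pi^\star,\xi)=\max_\xi[\rho(\pi^\star,P^\star)-\rho(\pi^\star,\xi)]$. Lemma~\ref{cor:policy_tran_error} applied with $\pi=\pi^\star$ then bounds this by $\frac{2\gamma R_{\max}}{(1-\gamma)^2}\|e_{\pi^\star}\|_{1,u^\star_{\pi^\star}}$, which is in fact strictly tighter than the claimed bound.

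When $\pi_R=\pi_B$, the branching condition of the algorithm failed, i.e., $\min_\xi \rho(\pi_0,\xi)\le\max_\xi \rho(\pi_B,\xi)$. Combining this with robust optimality $\min_\xi \rho(\pi^\star,\xi)\le\min_\xi \rho(\pi_0,\xi)$ gives $\min_\xi \rho(\pi^\star,\xi)\le\max_\xi \rho(\pi_B,\xi)$. I then decompose $\Phi(\pi_B)=\bigl[\rho(\pi^\star,P^\star)-\min_\xi \rho(\pi^\star,\xi)\bigr]+\bigl[\min_\xi \rho(\pi^\star,\xi)-\max_\xi \rho(\pi_B,\xi)\bigr]+\bigl[\max_\xi \rho(\pi_B,\xi)-\rho(\pi_B,P^\star)\bigr]$, observe that the middle bracket is non-positive, and bound the two outer brackets by Lemma~\ref{cor:policy_tran_error} applied with $\pi=\pi^\star$ and $\pi=\pi_B$ respectively. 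This produces exactly the two terms $\frac{2\gamma R_{\max}}{(1-\gamma)^2}\|e_{\pi^\star}\|_{1,u^\star_{\pi^\star}}$ and $\frac{2\gamma R_{\max}}{(1-\gamma)^2}\|e_{\pi_B}\|_{1,u^\star_{\pi_B}}$ of the claimed bound.

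The main obstacle is this second case: one has to exploit the branching-condition failure to manufacture the non-positive middle bracket that symmetrically absorbs both error contributions, exactly mirroring the decomposition used in the proof of Theorem~\ref{thm:perf_robust_interleave}. Once that split is in place, the remainder is a routine application of Lemma~\ref{cor:policy_tran_error} and the identity $p_0^\top(\eye-\gamma P^\star_\pi)^{-1}=(1-\gamma)^{-1}(u^\star_\pi)^\top$ that converts the $(\eye-\gamma P^\star_\pi)^{-1}$ factor into the occupancy-weighted $\ell_1$ norm.
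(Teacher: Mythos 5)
Your proof is correct and follows essentially the same route as the paper's: the same safety chain through the branching condition ($\rho(\pi_0,P^\star)\ge\min_\xi\rho(\pi_0,\xi)>\max_\xi\rho(\pi_B,\xi)\ge\rho(\pi_B,P^\star)$), the same replacement of $\pi_0$ by $\pi^\star$ via robust optimality, and the same application of Lemma~\ref{cor:policy_tran_error} to $\pi^\star$ and $\pi_B$ with the occupancy-measure identity. The only difference is presentational: you handle the two branches separately (and correctly observe that the $\pi_R=\pi_0$ branch admits the tighter single-term bound $\frac{2\gamma R_{\max}}{(1-\gamma)^2}\|e_{\pi^\star}\|_{1,u^\star_{\pi^\star}}$), whereas the paper folds both branches into a single inequality by noting that the extra term $\max_\xi\rho(\pi_B,\xi)-\rho(\pi_B,P^\star)$ is nonnegative.
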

\begin{proof}
To prove the safety of $\pi_R$ and bound its performance loss, we need to upper and lower bound the difference between the performance of any policy $\pi$ in the true MDP $P^\star$ and its worst-case performance in the uncertainty set $\Xi$, i.e.,~$\min_{\nat\in\Nat} \return\big(\pi,\nat\big)$. Since from Assumption~\ref{asm:error}, we have $P^\star\in\Nat$, we may write
\begin{equation} 
\label{eq:rob_proof_upper}
\min_{\nat\in\Nat}\return\big(\pi,\nat\big) \le \return(\pi,P^\star).
\end{equation} 
Now let $\bar{\xi}\in\Xi(\widehat{P},e)$ be the minimizer in $\min_{\nat\in\Nat}\return\big(\pi,\nat\big)$. The minimizer exists because of the continuity and compactness of the uncertainty set. 
Applying Lemma~\ref{cor:policy_tran_error} with $\nat=\bar{\xi}$, for any policy $\pi\in\Pi_R$, we obtain
\begin{equation} \label{eq:rob_proof_lower}
\return(\pi,P^\star) - \return\big(\pi,\bar{\xi}) = \return(\pi,P^\star) - \min_{\nat\in\Nat}\return\big(\pi,\nat\big) \le \frac{2\gamma \rmax}{1-\gamma} \, p_0\tr (\eye - \gamma P^\star_\pi)^{-1}e_\pi = \frac{2\gamma \rmax}{(1-\gamma)^2} \, \| e_\pi \|_{1,u^\star_\pi},    
\end{equation}
where $u^\star_\pi= (1-\gamma) p_0^\top(\eye - \gamma P^\star_\pi)^{-1}$ is the state occupancy distribution of policy $\pi$ in the true MDP $P^\star$. \\

{\bf To prove the safety of the returned policy $\pi_{R}$:} Consider the two cases on Line~\ref{ln:algrob_condition} of Algorithm~\ref{alg:robust}. When the condition is satisfied, i.e.,~$\rho_0>\max_{\nat\in\Nat}\return\big(\pi_B,\nat\big)$, we have 
\begin{equation*}
\return(\pi_B,P^\star) \leq \max_{\nat\in\Nat}\return\big(\pi_B,\nat\big) < \underbrace{\min_{\nat\in\Nat}\return\big(\pi_0,\nat\big)}_{\rho_0} \leq \return(\pi_0,P^\star), 
\end{equation*}
where the last inequality comes from~\eqref{eq:rob_proof_upper}, and thus, the policy $\pi_R=\pi_0$ is safe. When the condition is violated, then $\pi_R$ is simply $\pi_B$, which is safe by definition. \\

{\bf To derive a bound on the performance loss of the returned policy $\pi_R$:} Consider also the two cases on Line~\ref{ln:algrob_condition} of Algorithm~\ref{alg:robust}. When the condition is satisfied, using~\eqref{eq:rob_proof_upper}, we have
\begin{equation*}
\Phi(\pi_R) \stackrel{\Delta}{=} \return(\pi^\star,P^\star) - \return(\pi_R,P^\star) = \return(\pi^\star,P^\star) - \return(\pi_0,P^\star) \le \return(\pi^\star,P^\star) - \min_{\nat\in\Nat}\return\big(\pi_0,\nat\big)~,
\end{equation*}
and when the condition is violated, we have 
\begin{equation*}
\Phi(\pi_R) \stackrel{\Delta}{=} \return(\pi^\star,P^\star) - \return(\pi_R,P^\star) = \return(\pi^\star,P^\star) - \return(\pi_B,P^\star)~. 
\end{equation*}
Since when the condition is satisfied on Line~\ref{ln:algrob_condition} of Algorithm~\ref{alg:robust}, we have
\begin{equation*}
\min_{\nat\in\Nat}\return\big(\pi_0,\nat\big)>\max_{\nat\in\Nat}\return\big(\pi_B,\nat\big)
\end{equation*}
in both cases on Line~\ref{ln:algrob_condition} of Algorithm~\ref{alg:robust}, we may write
\begin{equation*} 
\Phi(\pi_R) \le \min\left\{\return(\pi^\star,P^\star) - \min_{\nat\in\Nat}\return\big(\pi_0,\nat\big)+\max_{\nat\in\Nat}\return\big(\pi_B,\nat\big)-\return(\pi_B,P^\star)~,~\overbrace{\return(\pi^\star,P^\star) - \return(\pi_B,P^\star)}^{\Phi(\pi_B)}\right\}~. 
\end{equation*}
The first term in the minimum can be written as
\begin{gather*}
\return(\pi^\star,P^\star) - \min_{\nat\in\Nat}\return\big(\pi_0,\nat\big)+\max_{\nat\in\Nat}\return\big(\pi_B,\nat\big)-\return(\pi_B,P^\star) \\
 \stackrel{\text{(a)}}{\le} 
\return(\pi^\star,P^\star) - \min_{\nat\in\Nat}\return\big(\pi^\star,\nat\big)+\max_{\nat\in\Nat}\return\big(\pi_B,\nat\big)-\return(\pi_B,P^\star) \\
\stackrel{\text{(b)}}{\le} \frac{2\gamma \rmax}{(1-\gamma)^2} \, \| e_{\pi^\star} \|_{1,u^\star_{\pi^\star}}+\frac{2\gamma \rmax}{(1-\gamma)^2} \, \| e_{\pi_B} \|_{1,u^\star_{\pi_B}},
\end{gather*}
where {\bf (a)} follows from $\pi_0$ being the solution to~\eqref{eq:objective_robust_interleave}, and thus, being the maximizer of $\min_{\nat\in\Nat}\return\big(\pi,\nat\big)$, and {\bf (b)} is from~\eqref{eq:rob_proof_lower} with $\pi=\pi^\star$ and $\pi=\pi_B$. 
\end{proof}


\newpage
\section{Solving the Reward-Adjusted MDP}
\label{sec:reward-adjusted}

In this section, we describe and analyze another simple method for computing safe policies that we did not include it in Section~\ref{sec:algorithms} due to space limitations, and show how it can be interpreted as an approximation of our proposed baseline regret minimization. This algorithm is based on solving a MDP with the same transition probabilities as the simulator, $\widehat{P}$, and rewards defined as $\widehat{r}(x,a)=r(x,a)-\frac{\gamma R_{\max}}{1-\gamma} e(x,a),\;\forall (x,a)\in\X\times\A$. We call this MDP, {\em reward-adjusted} (RaMDP), and denote its transition probabilities and rewards by $\widetilde{\xi}$. The unique property of RaMDP is that under Assumption~\ref{asm:error}, the performance of any policy $\pi$ in RaMDP is a lower-bound on its performance in the true MDP, i.e.,~$\rho(\pi,\widetilde{\nat})\le\rho(\pi,P^\star)$ (see Theorem~\ref{thm:perf_reward_adjusted}). Furthermore in comparison to the objective function of RMDP, the following proposition shows that $\rho(\pi,\widetilde{\nat})$ is always a lower-bound on $\min_{\nat\in\Nat} \return\big(\pi,\nat\big)$.

\begin{proposition}
	\label{prop:robust-vs-rewad}
	Given Assumption~\ref{asm:error}, for each policy $\pi$, we have $\;\min_{\nat\in\Nat} \return\big(\pi,\nat\big) \ge \return(\pi,\widetilde{\nat})$.
\end{proposition}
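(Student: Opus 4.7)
The plan is to reduce the statement to a single application of Lemma~\ref{lem:bound_transitions}, which is precisely the bound-transfer lemma that controls how value functions change when both transitions and rewards are perturbed. The key observation is that the reward adjustment $\widehat{r}(x,a)=r(x,a)-\tfrac{\gamma R_{\max}}{1-\gamma}e(x,a)$ in the RaMDP is engineered exactly so that the lower bound in Lemma~\ref{lem:bound_transitions} collapses to zero.

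First I would fix an arbitrary $\xi\in\Nat$ and consider the induced quantities under policy $\pi$: transitions $P_1 := \xi_\pi$ and $P_2 := \widehat{P}_\pi$, and rewards $r_1 := r_\pi$ and $r_2 := \widehat{r}_\pi = r_\pi - \tfrac{\gamma R_{\max}}{1-\gamma}e_\pi$. By the very definition of $\Nat(\widehat{P},e)$, each row of $\xi_\pi$ differs from the corresponding row of $\widehat{P}_\pi$ in $\ell_1$-norm by at most $e_\pi(x)=e(x,\pi(x))$ (extended to randomized $\pi$ by convexity of the $\ell_1$-norm in the mixing weights), so the hypothesis of Lemma~\ref{lem:bound_transitions} is satisfied.

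Applying the lower-bound half of Lemma~\ref{lem:bound_transitions}, the value functions $v_\pi^\xi$ (of $\pi$ under transitions $\xi$ and reward $r$) and $v_\pi^{\widetilde{\xi}}$ (of $\pi$ in the RaMDP) satisfy
\begin{equation*}
v_\pi^{\xi} - v_\pi^{\widetilde{\xi}} \;\ge\; (\eye-\gamma \xi_\pi)^{-1}\Bigl(r_\pi - \widehat{r}_\pi - \tfrac{\gamma R_{\max}}{1-\gamma}e_\pi\Bigr) \;=\; (\eye-\gamma \xi_\pi)^{-1}\cdot 0 \;=\; 0,
\end{equation*}
since by construction $r_\pi - \widehat{r}_\pi = \tfrac{\gamma R_{\max}}{1-\gamma}e_\pi$. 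Taking inner product with the nonnegative initial distribution $p_0$ gives $\rho(\pi,\xi)\ge \rho(\pi,\widetilde{\xi})$. Because this holds for every $\xi\in\Nat$ while the right-hand side does not depend on $\xi$, I can take the minimum over $\Nat$ on the left to conclude $\min_{\xi\in\Nat}\rho(\pi,\xi)\ge\rho(\pi,\widetilde{\xi})$.

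There is essentially no obstacle here beyond recognizing the cancellation: the entire content of the proposition is that the penalty $\tfrac{\gamma R_{\max}}{1-\gamma}e(x,a)$ is precisely the per-step reward correction needed to dominate the transition-perturbation term in Lemma~\ref{lem:bound_transitions}. The only mild subtlety is handling randomized $\pi$, which is resolved by noting that $\|\xi_\pi(\cdot|x)-\widehat{P}_\pi(\cdot|x)\|_1 \le \sum_a \pi(a|x)\|\xi(\cdot|x,a)-\widehat{P}(\cdot|x,a)\|_1 \le \sum_a\pi(a|x)e(x,a) =: e_\pi(x)$, after which the proof proceeds verbatim.
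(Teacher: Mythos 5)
Your overall strategy is the same as the paper's --- a single application of Lemma~\ref{lem:bound_transitions} that exploits the fact that the reward penalty $\frac{\gamma R_{\max}}{1-\gamma}e$ is designed to absorb the transition-perturbation term --- and your handling of randomized policies and of the minimum (proving the inequality for every $\xi$ and then taking the infimum, rather than invoking a minimizer as the paper does) is fine. However, your particular instantiation of the lemma has a gap. You set $(P_2,r_2)=(\widehat{P}_\pi,\widehat{r}_\pi)$, so that $v_2$ is the value function of the RaMDP. The lemma's bound is obtained in its proof from the estimate $\lvert(P_1(\cdot|x)-P_2(\cdot|x))^\top v_2\rvert\le e(x)\|v_2\|_\infty\le e(x)\frac{R_{\max}}{1-\gamma}$, which implicitly assumes $\|r_2\|_\infty\le R_{\max}$. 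The adjusted reward violates this: $\widehat{r}(x,a)=r(x,a)-\frac{\gamma R_{\max}}{1-\gamma}e(x,a)$ can be as small as $-R_{\max}-\frac{2\gamma R_{\max}}{1-\gamma}$ (the $\ell_1$ error $e$ can be as large as $2$), so $\|v_2\|_\infty$ may exceed $R_{\max}/(1-\gamma)$. Then the cancellation $r_1-r_2-\frac{\gamma R_{\max}}{1-\gamma}e_\pi=0$ no longer certifies $v_1-v_2\ge 0$: the residual term $\gamma(P_1-P_2)v_2$ is only bounded below by $-\gamma e_\pi\|v_2\|_\infty$, which can be strictly more negative than $-\frac{\gamma R_{\max}}{1-\gamma}e_\pi$.

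The fix is to keep the unadjusted reward on both sides of the lemma and put $\widehat{P}_\pi$ in the resolvent, which is what the paper does: apply Lemma~\ref{lem:bound_transitions} with $(P_1,r_1)=(\widehat{P}_\pi,r_\pi)$ and $(P_2,r_2)=(\xi_\pi,r_\pi)$, so that $v_2$ is a value function of a bounded-reward MDP and the upper half of the lemma gives
\begin{equation*}
\rho(\pi,\xi)\;\ge\;\rho(\pi,\widehat{P})-\frac{\gamma R_{\max}}{1-\gamma}\,p_0^\top(\eye-\gamma\widehat{P}_\pi)^{-1}e_\pi ,
\end{equation*}
and then observe that the right-hand side \emph{equals} $\rho(\pi,\widetilde{\nat})$ exactly, because $\widetilde{\nat}$ has the same transitions as $\widehat{P}$ and reward $\widehat{r}_\pi=r_\pi-\frac{\gamma R_{\max}}{1-\gamma}e_\pi$, so that $\rho(\pi,\widetilde{\nat})=p_0^\top(\eye-\gamma\widehat{P}_\pi)^{-1}\widehat{r}_\pi$. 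Minimizing over $\xi\in\Nat$ then yields the proposition. Your conclusion, and even your intermediate pointwise claim $v_1\ge v_2$, are in fact true, but they require this rearrangement (or a version of Lemma~\ref{lem:bound_transitions} strengthened to handle unbounded $r_2$) to be justified.
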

\begin{proof}
	Let $\bar{\xi}\in\Xi(\widehat{P},e)$ be the minimizer in $\min_{\nat\in\Nat}\return\big(\pi,\nat\big)$. The minimizer exists because of the continuity and compactness of the uncertainty set. From Lemma~\ref{lem:bound_transitions}, for each $\pi$, we may write
	\begin{equation*}
	\rho(\pi,\bar{\xi}) \ge \rho(\pi,\widehat{P}) - \frac{\gamma \rmax}{1-\gamma} p_0\tr (\eye - \gamma \widehat{P}_\pi)^{-1} e_\pi \stackrel{\text{(a)}}{=} \rho(\pi,\widetilde{\nat}),
	\end{equation*}
	where {\bf (a)} holds because $\widetilde{\nat}$ differs from $\widehat{P}$ only in its reward function, which is of the form $\widehat{r}_\pi=r_\pi-\frac{\gamma\rmax}{1-\gamma}e_\pi$.
\end{proof}

We conclude based on this proposition that the reward-adjusted method approximates the solution of the optimization problem~\eqref{eq:objective_robust_interleave} as
\begin{align}
\label{eq:RaMDP1}
\max_{\pi \in\Pi_R}\min_{\nat\in\Nat} \Bigl( \return(\pi, \nat) - \return(\pi_B, \nat)\Bigr) \nonumber &\geq \max_{\pi \in\Pi_R} \min_{\nat\in\Nat} \return(\pi, \nat)  -  \max_{\nat\in\Nat}\return(\pi_B, \nat) \nonumber \\ 
&\geq \max_{\pi \in\Pi_R}  \return(\pi, \widetilde{\nat}) -  \max_{\nat\in\Nat}\return(\pi_B, \nat),
\end{align}
and guarantees safety by designing $\pi$ such that the RHS of~\eqref{eq:RaMDP1} is always non-negative. Algorithm~\ref{alg:reward-adjusted} returns an optimal policy of the RaMDP $\widetilde{\nat}$, when the performance of this policy in $\widetilde{\nat}$ is better than the robust baseline performance $\max_{\nat\in\Nat}\rho(\pi_B,\nat)$, and returns $\pi_B$, otherwise. 

\IncMargin{1em}
\begin{algorithm}
	\SetKwInOut{Input}{input}\SetKwInOut{Output}{output}
	\Input{Simulated MDP $\widehat{P}$, baseline policy $\pi_B$, and the error function $e$} 
	\Output{Policy $\pi_{Ra}$}
	$\widehat{r}(x,a) \leftarrow r(x,a)-\frac{\gamma R_{\max}}{1-\gamma} e(x,a)$ \;
	$\pi_0 \leftarrow \arg \max_{\pi\in\Pi_R} \rho(\pi,\widetilde{\nat})$; $\quad\quad$ where $\;\widetilde{\xi}=(\widehat{r},\widehat{P})$ \\
	$\rho_0 \leftarrow \rho(\pi_0,\widetilde{\nat})$ \;
	\leIf{$\rho_0 > \max_{\nat\in\Nat}\rho(\pi_B,\nat)$}{
		$\pi_{Ra} \leftarrow \pi_0$}{
		$\pi_{Ra} \leftarrow\pi_B$ \label{ln:algexp_condition}}
	\Return $\pi_{Ra}$
	\caption{RaMDP-based Algorithm} 
	\label{alg:reward-adjusted}
\end{algorithm}

Since the performance of any policy in the RaMDP $\widetilde{\nat}$ is a lower-bound on its performance in the true MDP $P^\star$, it is guaranteed that the policy $\pi_{Ra}$ returned by Algorithm~\ref{alg:reward-adjusted} performs at least as well as the baseline policy $\pi_B$. Theorem~\ref{thm:perf_reward_adjusted} shows that $\pi_{Ra}$ is {\em safe} and quantifies its performance loss. 

\begin{theorem}
	\label{thm:perf_reward_adjusted}
	Given Assumption~\ref{asm:error}, the solution $\pi_{Ra}$ of Algorithm~\ref{alg:reward-adjusted} is safe, i.e.,~$\rho(\pi_{Ra},P^\star) \ge \rho(\pi_B,P^\star)$. Moreover, its performance loss $\Phi(\pi_{Ra})$ satisfies 
	\begin{equation*}
	\Phi(\pi_{Ra})\stackrel{\Delta}{=}\return(\pi^\star,P^\star) - \return(\pi_{Ra},P^\star) \le \min\!\left\{\frac{2 \gamma \rmax}{(1-\disc)^2}  \left(\| e_{\pi^\star} \|_{1,u^\star_{\pi^\star}}\!+ \| e_{\pi_B} \|_{1,u^\star_{\pi_B}}\right), \Phi(\pi_{B})\right\},
	\end{equation*}
	where $u^\star_{\pi^\star}$ is the state occupancy distribution of the optimal policy $\pi^\star$ in the true MDP $P^\star$, and $\Phi(\polbase)=\return(\pi^\star_{\nat\opt},P^\star) - \return(\polbase,P^\star)$ is the performance loss of the baseline policy.
\end{theorem}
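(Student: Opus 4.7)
I would follow the template of the proof of Theorem~\ref{thm:perf_robust} in Appendix~\ref{sec:proof-perf-robust}, replacing the worst-case return $\min_{\xi\in\Xi}\rho(\pi,\xi)$ by the RaMDP return $\rho(\pi,\widetilde\xi)$ throughout, and replacing the ``$P^\star\in\Xi$'' inequality by the RaMDP-lower-bound property that I establish below. The argument breaks naturally into three steps: (i) show that the RaMDP return underestimates the true return; (ii) check safety on both branches of Line~\ref{ln:algexp_condition}; and (iii) check the two-argument $\min$ bound on both branches.

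\textbf{Step 1 (RaMDP underestimates the true return).} Apply Lemma~\ref{lem:bound_transitions} with $P_1=P^\star$, $r_1=r$, $P_2=\widehat P$, $r_2=\widehat r=r-\tfrac{\gamma R_{\max}}{1-\gamma}e$, using the transition error bound $\|P^\star_\pi(\cdot\mid x)-\widehat P_\pi(\cdot\mid x)\|_1\le e_\pi(x)$ from Assumption~\ref{asm:error}. Since $r_1-r_2=\tfrac{\gamma R_{\max}}{1-\gamma}e_\pi$, the lower inequality in the lemma collapses to
\begin{equation*}
v_\pi^{P^\star}-v_\pi^{\widetilde\xi}\ge(I-\gamma P^\star_\pi)^{-1}\Bigl(\tfrac{\gamma R_{\max}}{1-\gamma}e_\pi-\tfrac{\gamma R_{\max}}{1-\gamma}e_\pi\Bigr)=0,
\end{equation*}
so $\rho(\pi,\widetilde\xi)\le\rho(\pi,P^\star)$ for every $\pi\in\Pi_R$. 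The upper inequality of the same lemma yields $\rho(\pi,P^\star)-\rho(\pi,\widetilde\xi)\le\tfrac{2\gamma R_{\max}}{(1-\gamma)^2}\|e_\pi\|_{1,u^\star_\pi}$, which I will later use to bound the $\|e_{\pi^\star}\|$ contribution.

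\textbf{Step 2 (Safety).} When the check on Line~\ref{ln:algexp_condition} passes, $\pi_{Ra}=\pi_0$ and combining Step~1 with $P^\star\in\Xi$ gives the chain
\begin{equation*}
\rho(\pi_{Ra},P^\star)\ge\rho(\pi_0,\widetilde\xi)=\rho_0>\max_{\xi\in\Xi}\rho(\pi_B,\xi)\ge\rho(\pi_B,P^\star).
\end{equation*}
When the check fails, $\pi_{Ra}=\pi_B$ is safe by definition.

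\textbf{Step 3 (Performance bound).} Set $A=\tfrac{2\gamma R_{\max}}{(1-\gamma)^2}\bigl(\|e_{\pi^\star}\|_{1,u^\star_{\pi^\star}}+\|e_{\pi_B}\|_{1,u^\star_{\pi_B}}\bigr)$ and $B=\Phi(\pi_B)$; I verify $\Phi(\pi_{Ra})\le A$ and $\Phi(\pi_{Ra})\le B$ in each branch. In the condition-satisfied branch, $\Phi(\pi_{Ra})\le B$ is immediate from the strict dominance of Step~2, and for $A$ I use optimality of $\pi_0$ in the RaMDP ($\rho_0\ge\rho(\pi^\star,\widetilde\xi)$) together with the non-negativity of $\max_\xi\rho(\pi_B,\xi)-\rho(\pi_B,P^\star)$ to obtain
\begin{equation*}
\Phi(\pi_{Ra})\le\rho(\pi^\star,P^\star)-\rho_0\le\bigl[\rho(\pi^\star,P^\star)-\rho(\pi^\star,\widetilde\xi)\bigr]+\bigl[\max_{\xi\in\Xi}\rho(\pi_B,\xi)-\rho(\pi_B,P^\star)\bigr],
\end{equation*}
after which Step~1 bounds the first bracket and Lemma~\ref{cor:policy_tran_error} bounds the second. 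In the condition-failed branch $\Phi(\pi_{Ra})=B$ trivially, and for $A$ I expand
\begin{equation*}
\Phi(\pi_B)=\bigl[\rho(\pi^\star,P^\star)-\rho(\pi^\star,\widetilde\xi)\bigr]+\bigl[\rho(\pi^\star,\widetilde\xi)-\rho_0\bigr]+\bigl[\rho_0-\rho(\pi_B,P^\star)\bigr],
\end{equation*}
where the middle bracket is non-positive by optimality of $\pi_0$ and the third is $\le\max_\xi\rho(\pi_B,\xi)-\rho(\pi_B,P^\star)$ precisely because the condition failed; the same two inequalities then finish the bound.

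\textbf{Main obstacle.} The key technical ingredient is Step~1: the specific penalty $\tfrac{\gamma R_{\max}}{1-\gamma}e$ is calibrated so that the reward correction cancels the worst-case transition perturbation in Lemma~\ref{lem:bound_transitions}, letting $\rho(\pi,\widetilde\xi)$ play exactly the role that $\min_{\xi\in\Xi}\rho(\pi,\xi)$ played in the proof of Theorem~\ref{thm:perf_robust}. The only genuinely new manipulation beyond that template is the three-term decomposition in the condition-failed branch, which routes through $\rho(\pi^\star,\widetilde\xi)$ and $\rho_0$ rather than through $\rho(\pi_0,\cdot)$ in order to exploit the one-sided inequality provided by the condition failure; every other step is a direct translation.
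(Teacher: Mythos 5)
Your proposal is correct and follows essentially the same route as the paper's proof: the same application of Lemma~\ref{lem:bound_transitions} with $P_1=P^\star$, $P_2=\widetilde{\nat}$ to show $\rho(\pi,\widetilde{\nat})\le\rho(\pi,P^\star)$ together with the weighted-$\ell_1$ upper bound, the same safety chain through $\max_{\nat\in\Nat}\rho(\pi_B,\nat)$, and the same use of $\pi_0$'s optimality in the RaMDP plus Lemma~\ref{cor:policy_tran_error} for the baseline term. The only difference is cosmetic: you verify the two elements of the $\min$ branch by branch, whereas the paper packages both branches into a single combined inequality before bounding its first term.
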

\begin{proof}
	To prove the safety of $\pi_{Ra}$ and bound its performance loss, we need to upper and lower bound the difference between the performance of any policy $\pi$ in the true MDP $P^\star$ and its performance in $\widetilde{\nat}$, i.e.,~$\return(\pi,P^\star)-\return(\pi,\widetilde{\nat})$. These upper and lower bounds are obtained by applying Lemma~\ref{lem:bound_transitions} with $P_1=P^\star$, and $P_2=\widetilde{\nat}$ as follows:
	\begin{equation}
	\label{eq:AppC1}
	\return(\pi,P^\star) - \return(\pi,\widetilde{\nat}) \ge p_0\tr (\eye - \gamma P^\star_\pi)^{-1} \left(r_\pi - \widehat{r}_\pi - \frac{\gamma \rmax}{1-\gamma} e_\pi \right) \ge 0,
	\end{equation}
	where the second inequality in~\eqref{eq:AppC1} follows from the definition of the adjusted reward function $\widehat{r}$, and the fact that $(\eye - \gamma P^\star_\pi)^{-1}$ is monotone and $p_0$ is non-negative. Similarly, the upper-bound may be written as
	\begin{equation}
	\label{eq:exp_proof_lower}
	\return(\pi,P^\star) - \return(\pi,\widetilde{\nat}) \le \frac{2\gamma \rmax}{1-\gamma} p_0\tr (\eye - \gamma P^\star_\pi)e_\pi = \frac{2\gamma \rmax}{(1-\gamma)^2} \, \| e_\pi \|_{1,u^\star_\pi},    
	\end{equation}
	where $u^\star_\pi = (1-\gamma) p_0^\top(\eye - \gamma P^\star_\pi)^{-1}$ is the state occupancy distribution of policy $\pi$ in the true MDP $P^\star$. \\
	
	{\bf To prove the safety of the returned policy $\pi_{Ra}$:} Consider the two cases on Line~\ref{ln:algexp_condition} of Algorithm~\ref{alg:reward-adjusted}. When the condition is satisfied, we have 
	\begin{equation*}
	\return(\pi_B,P^\star)\leq \max_{\nat\in\Nat}\rho(\pi_B,\nat)< \return(\pi_0,\widetilde{\nat}) \le \return(\pi_0,P^\star),
	\end{equation*}
	where the last inequality comes from~\eqref{eq:AppC1}, and thus, the policy $\pi_{Ra}=\pi_0$ is {\em safe}. When the condition is violated, then $\pi_{Ra}$ is simply $\pi_B$, which is safe by definition. \\
	
	{\bf To derive a bound on the performance loss of the returned policy $\pi_{Ra}$:} Consider also the two cases on Line~\ref{ln:algexp_condition} of Algorithm~\ref{alg:reward-adjusted}. When the condition is satisfied, using~\eqref{eq:AppC1}, we have
	\begin{equation*}
	\Phi(\pi_{Ra}) \stackrel{\Delta}{=} \return(\pi^\star,P^\star) - \return(\pi_{Ra},P^\star) = \return(\pi^\star,P^\star) - \return(\pi_0,P^\star) \le \return(\pi^\star,P^\star) - \return(\pi_0,\widetilde{\nat}),
	\end{equation*}
	and when the condition is violated, we have 
	\begin{equation*}
	\Phi(\pi_{Ra}) \stackrel{\Delta}{=} \return(\pi^\star,P^\star) - \return(\pi_{Ra},P^\star) = \return(\pi^\star,P^\star) - \return(\pi_B,P^\star).
	\end{equation*}
	Since when the condition is satisfied on Line~\ref{ln:algexp_condition} of Algorithm~\ref{alg:reward-adjusted}, we have 
	\begin{equation*}
	\return(\pi_0,\widetilde{\nat})>\max_{\nat\in\Nat}\rho(\pi_B,\nat),
	\end{equation*}
	in both cases on Line~\ref{ln:algexp_condition} of Algorithm~\ref{alg:reward-adjusted}, we may write
	\begin{equation*}
	\Phi(\pi_{Ra}) \le \min\left\{\return(\pi^\star,P^\star) - \return(\pi_0,\widetilde{\nat})+\max_{\nat\in\Nat}\return\big(\pi_B,\nat\big)-\return(\pi_B,P^\star)\;,\;\overbrace{\return(\pi^\star,P^\star) - \return(\pi_B,P^\star)}^{\Phi(\pi_B)}\right\}.
	\end{equation*}
	The first term in the minimum may be written as
	\begin{equation*}
	\begin{split}
	&\return(\pi^\star,P^\star) - \return(\pi_0,\widetilde{\nat})+\max_{\nat\in\Nat}\return\big(\pi_B,\nat\big)-\return(\pi_B,P^\star)\\
	\stackrel{\text{(a)}}{\leq}& \return(\pi^\star,P^\star) - \return(\pi^\star,\widetilde{\nat}) +\max_{\nat\in\Nat}\return\big(\pi_B,\nat\big)-\return(\pi_B,P^\star)\stackrel{\text{(b)}}{\le} \frac{2\gamma \rmax}{(1-\gamma)^2} \, \| e_{\pi^\star} \|_{1,u^\star_{\pi^\star}}+\frac{2\gamma \rmax}{(1-\gamma)^2} \, \| e_{\pi_B} \|_{1,u^\star_{\pi_B}},
	\end{split}
	\end{equation*}
	where {(a)} follows from $\pi_0$ being an optimal policy of RaMDP $\widetilde{\nat}$ and {(b)} is from~\eqref{eq:exp_proof_lower} with $\pi=\pi^\star$ and $\pi=\pi_B$.
\end{proof}

Theorem~\ref{thm:perf_reward_adjusted} indicates that by this simple adjustment in the reward function of the simulator $\widehat{P}$, we may guarantee that our solution is {\em safe}. Moreover, it shows that the bound on the performance loss of $\pi_{Ra}$ is actually tighter than that for the solution $\pi_{\text{sim}}$ of the simulator, reported in Theorem~\ref{thm:perf-simulator}. 

While~\cref{alg:robust} is more complex than Algorithm~\ref{alg:reward-adjusted} (since solving a RMDP is more complicated than a standard MDP), Theorem~\ref{thm:perf_robust} does not show any advantage for $\pi_R$ over $\pi_{RA}$, neither in terms of safety nor in terms of the bound on its performance loss (compared to Theorem~\ref{thm:perf_reward_adjusted}). On the other hand, while Algorithm~\ref{alg:reward-adjusted} guarantees to yield a safe policy more efficiently than Algorithm~\ref{alg:robust}, from Proposition~\ref{prop:robust-vs-rewad} one notices that Algorithm~\ref{alg:reward-adjusted} may be overly conservative in many circumstances. This is because the adjustment of the reward function is based on the assumption that there exists a state with the maximum value of $\rmax / (1-\gamma)$ and that this state is accessible from all other states with reward $\rmax$. Thus, we may conclude that Algorithm~\ref{alg:robust} returns a less conservative safe policy (compared to Algorithm~\ref{alg:reward-adjusted}), with extra computational cost.

The experimental results of Section~\ref{sec:experiments} also show that the reward-adjusted solution of Algorithm~\ref{alg:reward-adjusted} can be extremely conservative.


\newpage
\section{Description of Experimental Domains} \label{app:domains}


\subsection{Grid Problem}  \label{app:grid_problem}

We now describe the grid problem in more detail. The state space in the problem comes from a two-dimensional grid: $\mathcal{S} = \{s_{ij} \ss i\in\mathcal{I}, j\in \mathcal{J}\}$; $i$ and $j$ represent a column and row respectively. Columns represent states of an interaction with the website, and rows represent more complex customer states, such as overall satisfaction. The dimensions are $|\mathcal{I}|= 12$ and $\mathcal{J}=3$.

There are 4 actions: $a_L$ for left, $a_R$ for right, $a_U$ for up, and $a_U$ for down. Rewards are independent of actions and depend only on states and only on the column: $x_{ij} = r_i$ where $r = [-1,1,2,3,2,1,-1,-2,-3,3,4,5]$. Actions left and right generally decrease and increase the column number; but can fail and in that case the transition is to a random column. The failure probability $z_j$ depends on the row $j$, with specific failure probabilities: $z=[0.9,0.2,0.3]$. If a transition fails, then the next state is chosen according to a fixed distribution which is generated a priori from a Dirichlet distribution. The distribution for first and last row are the same, and the middle row is the average of the two.

\cref{alg:transition} describes how the transition from a state is computed. The initial state is $s_{00}$.

\begin{algorithm}
	\KwData{Current state $s_{ij}$, action $a$, distributions $P_j$}
	\KwResult{Next state $s_{kl}$}
	
	\uIf{$\text{Random uniform from } [0,1] > z_j$}{
		\uIf{$a = a_R$}{
			$k \leftarrow i + 1$ \;
		}
		\uElseIf{$a = a_L$}{
			$k \leftarrow i - 1$ \;
		}
		\Else{
			$k \leftarrow \text{Random from } P_j$ \;
		}
		$k \leftarrow \max\{0, \min\{k, |\mathcal{I}|-1\} \}$ \;
	} 
	\Else{
		$k \leftarrow \text{Random from } P_j$ \;
	}
		
	\uIf{$a = a_U$}{
		$l \leftarrow j + 1$ \;
	}
	\uElseIf{$a = a_D$}{
		$l \leftarrow j - 1$ \;
	}
	\Else{
		$e \leftarrow \text{Random uniform from } [0,1]$ \;
		\uIf{$e \le 0.35$}{
			$l \leftarrow j + 1$ \;
		}
		\uElseIf{$e \le 0.7$}{
			$l \leftarrow j - 1$ \;
		}
		\Else{
			$l \leftarrow j$ \;
		}
	}
	$l \leftarrow \max\{0, \min\{l, |\mathcal{J}|-1\} \}$ \;
	\Return $s_{kl}$\;
	\caption{Transitions for state and action.} \label{alg:transition}
\end{algorithm}


\subsection{Energy Arbitrage} \label{app:energy_arbitrage}

The energy arbitrage model is based on \cite{Petrik2015} using a discount factor $0.9999$. We summarize it here for ease of reference. Recall that even though the state and action spaces in this problem are continuous, we discretize them as described in \cref{sec:experiments}.

The problem represents an energy arbitrage model with multiple finite \emph{known} price levels and a stochastic evolution given a limited storage capacity. In particular, the storage is assumed to be an electrical battery that degrades when energy is stored or retrieved.  Energy prices are governed by a Markov process with states $\Theta$. There are two energy prices in each time step: $p^i : \Theta \rightarrow \RealPlus$ is the purchase (or input) price and $p^o: \Theta \rightarrow \RealPlus$ is the sell (or output) price. Energy prices $\theta$ vary between 0 and 10 and their evolution is governed by a martingale with a normal distribution around the mean.

We use $s$ to denote the available battery capacity with $s_0$ denoting the initial capacity. The current state of charge is denotes as $z$ or $y$ and must satisfy that $0 \le z_t \le s_t$ at any time step $t$. The action is the amount of energy to charge or discharge, which is denoted by $a$. A positive $a$ indicates that energy is purchased to charge the battery; a negative $a$ indicates the sale of energy. 

The battery storage degrades with use. The degradation is a function of the battery capacity when charged or discharged. We use a general model of battery degradation with a specific focus on Li-ion batteries. The degradation function $d(z,a) \in \RealPlus$ represent the battery capacity loss after starting at the state of charge $z \ge 0$ and charging (discharging if negative) by $a$ with $-z \le a \le s_0$.
This function indicates the loss of capacity, such that:
\[ s_{t+1} = s_t - d(z_t,a_t) \]

The state set in the Markov decision problem is composed of $(z,s,\theta)$ where $z$ is the state of charge, $s$ is the battery capacity, and $\theta\in\Theta$ is the state of the price process. The available actions in a state $(z,s,\theta)$ are $a$ such that $-z \le a \le s-z$. The transition is from $(z_t,s_t,\theta_t)$ to $(z_{t+1},s_{t+1},\theta_{t+1})$ given action $a_t$ is:
\begin{align*}
z_{t+1} &= z_t + a_t \\
s_{t+1} &= s_t - d(z_t,a_t)
\end{align*}
The probability of this transition is given by $P[\theta_{t+1} \vert \theta_t]$. The reward for this transition is:
\[ r((z_t,s_t,\theta_t), a_t) = \begin{cases} 
- a_t \cdot p^i - c^d \cdot d(z_t,a_t) &\text{if } a_t \ge 0 \\ 
- a_t \cdot p^o - c^d \cdot d(z_t,a_t) &\text{if } a_t < 0 \end{cases}.\]
That is, the reward captures the monetary value of the transaction minus a penalty for degradation of the battery. Here, $c^d$ represents the cost of a unit of lost battery capacity.

The Bellman optimality equations for this problem are:
\begin{equation}\label{eq:bellman_optimality_simple} 
\begin{aligned}
q_T(z,s,\theta) &= 0 \\
v_t(z,s,\theta_t) &= \min \bigl\{p^i_{\theta_t} \pos{a} + p^o_{\theta_t} \negt{a}  +  \\
& \quad + c^d \, d(z,a) +  \\
&\quad +q_t(z + a,s- d(z,a),\theta_t) :  \\
&\qquad :  a \in [ -z, s - z ] \bigr\}   \\
q_t(z , s, \theta_t) &= \lambda \cdot \operatorname{E}[v_{t+1}( z,s,\theta_{t+1}) ]
\end{aligned}
\end{equation}
where $\pos{a} = \max\{a, 0\}$ and $\negt{a} = \min\{a, 0\}$ and the expectation is taken over $P(\theta_{t+1} | \theta_t)$. 

Please see \cite{Petrik2015} for more details, including the price transition matrix.

\end{document}